\newtheorem{theorem}{Theorem}
\newcommand{\field}[1]{\mathbb{#1}}
\newcommand{\R}{\field{R}}
\begin{document}

\title{Inverse problem for parameters identification in a modified SIRD epidemic model using ensemble neural networks}

\author{
\textbf{Marian Petrica}\\
{\small Faculty of Mathematics and Computer Science}\\
{\small University of Bucharest}\\
{\small Gheorghe Mihoc - Caius Iacob Institute of}\\
{\small Mathematical Statistics and Applied Mathematics}\\
{\small of the Romanian Academy}\\
{\small marianpetrica11@gmail.com}
\and
\textbf{Ionel Popescu}\\
{\small Faculty of Mathematics and Computer Science}\\
{\small University of Bucharest}\\
{\small Institute of Mathematics of the Romanian Academy}\\
{\small ioionel@gmail.com}\\
}
\date{\vspace{-1ex}}
\maketitle

\begin{abstract}  
In this paper, we propose a parameter identification methodology of the SIRD model, an extension of the classical SIR model, that considers the deceased as a  separate category. In addition, our model includes one parameter which is the ratio between the real total number of infected and the number of infected that  were documented in the official statistics.

Due to many factors, like governmental decisions, several variants circulating, opening and closing of schools, the typical assumption that the parameters of the model stay constant for long periods of time is not realistic.  Thus our objective is to create a method which works for short periods of time. In this scope, we approach the estimation relying on the previous 7 days of data and then use the identified parameters to make predictions. 

To perform the estimation of the parameters we propose the average of an ensemble of neural networks. Each neural network is constructed based on a database built by solving the SIRD for 7 days, with random parameters.  In this way, the networks learn the parameters from the solution of the SIRD model.  

Lastly we use the ensemble to get estimates of the parameters from the real data of Covid19 in Romania and then we illustrate the predictions for different periods of time, from 10 up to 45 days, for the number of deaths. The main goal was to apply this approach on the analysis of COVID-19 evolution in Romania, but this was also exemplified on other countries like Hungary, Czech Republic and Poland with similar results.

The results are backed by a theorem which guarantees that we can recover the parameters of the model from the reported data. We believe this methodology can be used as a general tool for dealing with short term predictions of infectious diseases or in other compartmental models.
\end{abstract}

\section{Introduction}

Infectious disease pandemics have had a major impact on the evolution of mankind and have played a critical role in the course of history. Over the ages, pandemics made countless victims, decimating entire nations and civilizations. As medicine and technology have made remarkable progress in the last century, the means of fighting pandemics have become significantly more efficient. A reality nowadays is that globalization, the development of commerce, and the ease to travel all over the world facilitate the transmission mechanism of a new disease much more than it did in the past.

In 2019, Covid19, a virus from the coronavirus family appeared and spread around the world very quickly.  This changed dramatically our world as we know it.  

\subsection{Covid pandemic}

In 2019, COVID19, a virus from the coronavirus family appeared and spread around the world very quickly. The first cases of COVID19 were reported in China in December 2019 and, within less than 3 months, the outbreak became a global pandemic, spreading across almost all countries all over the world. In the period that followed, we witnessed many changes in political decisions including lockdowns, school closures and many other restrictions which were aimed to control the spreading of the virus. 

\subsection{The main ideas of this paper}

Fighting COVID-19 was at first driven by quarantine and other restriction measures.  These were imposed because the mechanisms of infection were not well understood and the hospitals were overwhelmed.  Later on, various degrees of restrictions were imposed in order to control the spread and, at the same time, let the economy recover.  Thus many political decisions affected for better or for worse the transmission of the virus.  

Mathematical modeling is by now one of the scientific pillars on which we build our understanding of the world. It is a valuable tool that can be used in the assessment, prediction and control of infectious diseases, as it is the COVID-19 pandemic.  There is a growing body of mathematical models used at the moment for the spread of virus which are related to our approach in this note.  From the rich literature around this topic we  sample the papers  \cite{choisy2007mathematical, wakefield2019spatio, grassly2008mathematical, kucharski2020early,sardar2020assessment,schuttler2020covid,ferguson2020report,tsay2020modeling}.

One popular choice used to mathematically model the epidemics is the $SIR$ model, appeared in \cite{rossjohn, ross1916application, ross1917application, kermack1991contributions1, kermack1991contributions2, kermack1933contributions3}. This is a compartmental model where an individual can be in one of the following $3$ states, at any given time: susceptible $(S)$, infected $(I)$ or recovered (in some sources removed) $(R)$.  
An extension of the $SIR$ model is the $SIRD$ model which considers the category of deaths as a separate compartment.

The epidemic $SIR/SIRD$ models are non-linear and the identification of the parameters is challenging, compared to the linear models where analytical approaches exist (Godfrey and DiStefano, \cite{godfrey}).  For the case of non-linear models there are methods for identifying the parameters, that are assumed to be constant, by using Taylor series (Gun et al, \cite{gunn}; Pohjanpalo \cite{pohjanpalo}) or differential algebra (Audoly et al, \cite{audoly}; Eisenberg \cite{eisenberg}).  Differential algebra can also be an useful tool in the case of time dependent parameters (Hadeler, \cite{hadeler}, Mummert, \cite{mummert}).  Marinov et al, \cite{marinov}, also proposed a numerical scheme for coefficient identification in SIR epidemic models using the Euler-Lagrange equations.  In the case of the stochastic models, parameter estimation is a type of statistical inference, procedures as least square estimation (Banks et al. \cite{banks}) or maximum likelihood estimation (Julier, \cite{julier}) being the most used techniques. 

Many studies have been done on the evolution of the pandemics around the world. For the case of Covid19 pandemic, the assessment and prediction of the spread as well as the identification of the parameters were rigorously and in detail studied in the papers \cite{lazebnik, bentout1, bentout2, bentout3, bentout4, petrica2020regime, petrica2020self, esquivel2021multi}. 

The purpose of this work is to detail a predictive model, define a methodology of identifying its parameters and accurately assess the transmission dynamic of COVID-19, with potential applicability for other infectious models.  At the same time we analyse the evolution of the pandemic in Romania and we validate the method, in a separate Appendix, for the case of Hungary, Czech Republic and Poland.  The application of predictive models in the study of pandemics dynamics has been exhaustively addressed in the following studies \cite{lazebnik, mohamadou, vega, alexi, rahimi}.

The starting point of our study is the idea that, during the pandemic, it is hard to measure the number of infected, susceptible or recovered persons over time. Due to the large size population, it is very difficult to accurately know the number of infected or the number of recovered people. As Covid19 showed, the governmental institutions and the hospitals became rapidly overwhelmed, while many people became infected and treated themselves at home, without it being recorded in the official statistics.

The dynamic of the pandemic was driven by many factors,  for instance lockdown, opening or closure of the restaurants, elections, opening or closing of schools, vacationing and other measures taken by authorities in order to mitigate the pandemic effects. 
Thus any reasonable parametric model is negatively impacted by the fact that the coefficients are not constant in time.  This implies that the estimations can not be done on long term.  In a previous paper \cite{petrica2020regime} we considered a regime switch which was geared toward adapting the parameters to the political decisions of lockdown or relaxation.  

In this paper the main idea is to use a dynamic model which does not take into account long term evolution or outside assumptions about the status of the pandemic.  In this approach we consider the data on a short period of time, in our case we chose to work with seven days and estimate the parameters relying primarily on the number of deceased people.  

We do this in several steps.  The first one is the cleaning and smoothing of the data.  We fixed some anomalies in the reporting and we take a moving average of two weeks time. The second step is to exploit the model and generate data with parameters chosen at random.  Based on only seven days of data, we train neural networks to learn the parameters of the model.   Furthermore, we used these neural networks to estimate the parameters based on the real data.  The key here is a form of ensemble learning which is interesting in itself.  A single neural network does not seem to predict the parameters very well, however the average has a much better prediction power.  The third and last step is to predict the evolution for a number of future days and compare with the real data.  For a range of ten days, we get very close results to the real data.   

We should point out that in mathematical terms, our approach is a typical inverse problem.  Generically, inverse problems can be ill posed.  We show that our model is actually well posed, meaning that determination of the coefficients from the observation do identify the coefficient uniquely.  This is the backbone of our analysis and estimation.   

The paper is organized as follows.  In Section~\ref{s:3} we briefly discuss the SIR model and we introduce the SIRD model.  Here we state Theorem~\ref{t:2} whose proof is in Appendix 1.  Also here we describe the guiding idea of our approach.  In Section~\ref{s:2} we discuss the data and its cleaning and describe the construction of the neural networks.  Then we proceed with the Discussions and Conclusions in Section~\ref{s:6}.  

\section{The SIR and SIRD models}\label{s:3}

One of the most important models that can describe infectious diseases is the SIR model. The first ones that developed SIR epidemic models were Bernoulli \cite{bernoulli}, Ross \cite{ross1916application,ross1917application}, Kermack-McKendrick \cite{kermack1927contribution} and Kendall \cite{kendall2020deterministic}.

The SIR model is a mathematical model that can be used in epidemiology to analyze, at a given time for a specific population,  the interactions and dependencies between the number of individuals who are susceptible to get an infectious disease, the number of people who are currently infected and those who have already been recovered or have died as a cause of the infection. This model can be used to describe diseases that can be contracted just one time, meaning that a susceptible individual gets a disease by contracting an infectious agent, which is afterward removed (death or recovery).

It is assumed that an individual can be in either one of the following three states: susceptible (S), infected (I) and removed/ recovered (R). This can be represented in the following mathematical schema:

\tikzstyle{int}=[draw, fill=blue!20, minimum size=2em]
\tikzstyle{init} = [pin edge={to-,thin,black}]
\begin{center}
\begin{tikzpicture}[node distance=3 cm,auto,>=latex']
    \node [int] (a) {Susceptible};
    \node [int] (c) [right of=a] {Infected};
    \node [int] (d) [right of=c] {Recovered};
    \node [coordinate] (end) [right of=c, node distance=2cm]{};
    \path[->] (a) edge node {$\beta$} (c);
    \draw[->] (c) edge node {$\gamma$} (end) ;
\end{tikzpicture}
\end{center}

where:
\begin{itemize}
\item $\beta$ = infection rate
\item $\gamma$ = recovery rate.
\end{itemize}

We consider $N$ as the total population in the affected area. We assume $N$ to be fixed, with no births or deaths by other causes, for a given period of n days. Therefore, $N$ is the sum of the three categories previously defined: the number of susceptible people, the ones infected, and the ones removed:
\[
N = \bar{S} + \bar{I} + \bar{R}.
\]

Therefore, we analyze the following SIR model: at time $t$, we consider $\bar{S}(t)$ as the number of susceptible individuals, $\bar{I}(t)$ as the number of infected individuals, and $\bar{R}(t)$ as the number of removed/recovered individuals. The equations of the SIR model are the following:

\begin{equation}\label{eq0}
\begin{cases}
\frac{d \bar{S}(t)}{dt}=-\frac{\bar{\beta} \bar{S}(t) \bar{I}(t)}{N} \\
\frac{d\bar{I}(t)}{dt}=\frac{\bar{\beta} \bar{S}(t) \bar{I}(t)}{N}-\gamma \bar{I}(t) \\
\frac{d\bar{R}(t)}{dt}=\gamma \bar{I}(t)
\end{cases}
\end{equation}
where:
\begin{itemize}
 \item  $\frac{d \bar{S}(t)}{dt}$ is the rate of change of the number of individuals susceptible to the infection over time;
\item $\frac{d\bar{I}(t)}{dt}$ is the rate of change of the number of individuals infected over time;
\item $\frac{d\bar{R}(t)}{dt}$ is the rate of change of the number of individuals recovered over time.
\end{itemize}

Because there is no canonical choice of $N$, we will transform the system \eqref{eq0} by dividing it by $N$ and considering $S(t)=\bar{S}(t)/N$, $I(t)=\bar{I}(t)/N$ and $\hat{R}(t)=\bar{R}(t)/N$.  It is customary to choose $N=10^6$ for convenience but this is just an arbitrary choice.  For instance, analysis on smaller communities or cities involves less than $10^6$, however, $10^6$ is a common choice because countries number their populations in multiples of $10^6$. With these notations, we translate \eqref{eq0} into
\begin{equation}\label{eq1}
\begin{cases}
\frac{d S(t)}{dt}=-\beta S(t)I(t) \\
\frac{d I(t)}{dt}=-\beta S(t)I(t)-\gamma I(t) \\
\frac{d \hat{R}(t)}{dt}=\gamma I(t)
\end{cases}
\end{equation}
where $\beta=\bar{\beta}/N$ and $\gamma$ is the same as in \eqref{eq0}.  Observe that we actually have that $S(t)+I(t)+\hat{R}(t)=S_0+I_0+\hat{R}_0=1$ for all $t\ge0$.  

We use a slight change in the SIR model which accounts for the number of deceased people separately.  The main idea here being that the number of deceased people might be more reliable than other data, as for instance the number of infected.  In the plain vanilla SIR model the recovered and deceased are combined into the single category of recovered.  The idea of the SIRD model is to use the provided data of deceased people in a significant way.

To this aim, we will work with four variables changing in time, namely $S(t)$, $I(t)$, $R(t)$ and $D(t)$ where $R(t)$ is the proportion of recovered and alive people while the $D(t)$ is the proportion of deceased people.  We set the SIRD model as an interaction driven by the system of differential equations:

\begin{equation}\label{sird}
\begin{cases}
\frac{d S(t)}{dt}=-\beta S(t) I(t)\\
\frac{dI(t)}{dt}=\beta S(t) I(t) -(\gamma_1+\gamma_2) I \\
\frac{dR(t)}{dt}=\gamma_1 I(t)\\
\frac{dD(t)}{dt}=\gamma_2 I(t).
\end{cases}
\end{equation}

Notice that in this setup the recovered population bifurcates into recovered ones, accounted by $R$ and the dead ones accounted by $D$.   We also point out that by taking sum of the two factors $\hat{R}(t)=R(t)+D(t)$ above we fall into the classical SIR model.  The reason of accounting for $D(t)$ separately is that the data reports the number of deaths separately and the model above allows to fit the parameters using the data.

Even if the above system is satisfactory to  a certain degree, we would like to point out that in practice, the number of infected as well as the number of recovered is not really known.  The data we have at our disposal reports the number of infected and recovered which are \emph{documented}.  The real number of infected is not really observed.  Therefore we will adjust the model by introducing another parameter $\alpha$ which measures the proportion of \emph{observed} number of infected and recovered.  Therefore, we denote
\[
 \tilde{I}(t)=\alpha I(t) \text{ and }\tilde{R}(t)=\alpha R(t).
\]
In terms of these new quantities, the SIRD model becomes now

\begin{equation}\label{sird_adj}
\begin{cases}
\frac{d S(t)}{dt}=-\frac{\beta}{\alpha} S(t) \tilde{I}(t)\\
\frac{d\tilde{I}(t)}{dt}=\beta S \tilde{I}(t) -(\gamma_1+\gamma_2) \tilde{I}(t) \\
\frac{d\tilde{R}(t)}{dt}=\gamma_1 \tilde{I}(t)\\
\frac{dD(t)}{dt}=\frac{\gamma_2}{\alpha} \tilde{I}(t).
\end{cases}
\end{equation}

We are going to estimate the parameters $\alpha,\beta,\gamma_1,\gamma_2$ from data based on certain number of days.  The advantage of getting an estimate on $\alpha$ is that we can in reality predict the real number of infected people and also the number of recovered people.  In our adjusted model we have
\begin{equation}\label{e:SIRD_ajd_2}
S(t)+\frac{1}{\alpha}(\tilde{I}(t)+\tilde{R}(t))+D(t)=1
\end{equation}
for all times $t\ge0$.  To see this, we start by noticing that summing up all the equations in \eqref{sird}, we get that the derivative of $S(t)+I(t)+R(t)+D(t)$ is constant in time.  Since $S_0+I_0+R_0+D_0=1$, as they represent the proportion of the entire population, we get that the sum $S(t)+I(t)+R(t)+D(t)=1$ and using the definition of $\alpha$ and $\tilde{I},\tilde{R}$ we arrive at \eqref{e:SIRD_ajd_2}.

We present next the main mathematical result, with a proof in the Appendix 1.  This guarantees that the problem is well posed and we can recover the main parameters of the model.

\begin{theorem}\label{t:2} Given the observations of $\tilde{I}(0)=\tilde{I}_0, \tilde{R}(0)=\tilde{R}_0$ and  $D(t)$ for $t=0,1,2,3,4$, we can uniquely determine the parameters $\alpha, \beta,\gamma_1,\gamma_2$.
\end{theorem}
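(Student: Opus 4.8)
The plan is to collapse the four-dimensional system \eqref{sird_adj} to a single scalar ODE for $D$, and then read off the parameters from the local behaviour of $D$ at $t=0$, which the five reported values encode. First I would extract two explicit first integrals valid along any trajectory. Dividing the $S$-equation by the $D$-equation gives $\frac{dS}{dD}=-\frac{\beta}{\gamma_2}S$, hence $S(t)=S_0\exp\!\left(-\frac{\beta}{\gamma_2}(D(t)-D_0)\right)$; similarly $\frac{d\tilde R}{dD}=\frac{\alpha\gamma_1}{\gamma_2}$ is constant, so $\tilde R(t)=\tilde R_0+\frac{\alpha\gamma_1}{\gamma_2}(D(t)-D_0)$. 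The conservation law \eqref{e:SIRD_ajd_2} fixes $S_0=1-\frac1\alpha(\tilde I_0+\tilde R_0)-D_0$ in terms of $\alpha$ and the given data, and then expresses $\tilde I$ through $S$, $\tilde R$ and $D$. Substituting into $D'=\frac{\gamma_2}{\alpha}\tilde I$ reduces the whole system to one autonomous scalar equation $D'=f(D)$, whose right-hand side depends on four effective constants, conveniently $A=\frac{\gamma_2}{\alpha}\tilde I_0$, $B=\gamma_2 S_0$, $C=\gamma_1+\gamma_2$ and $a=\beta/\gamma_2$. I would record the elementary observation that, for fixed $\tilde I_0,\tilde R_0,D_0$, the map $(\alpha,\beta,\gamma_1,\gamma_2)\mapsto(A,B,C,a)$ is a bijection, so it suffices to recover the quadruple $(A,B,C,a)$.

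Next I would compute the successive derivatives of $D$ at $t=0$ from $D'=f(D)$, which is precisely where the five values $D(0),\dots,D(4)$ enter: they supply $D_0$ together with $D'(0),D''(0),D'''(0),D^{(4)}(0)$. A direct differentiation produces a triangular system: $D'(0)=A$; $D''(0)=A(Ba-C)$; $D'''(0)=A(Ba-C)^2-A^2(Ba^2)$; and $D^{(4)}(0)$ contributes the single remaining combination $A^3(Ba^2)a$. Solving in order yields $A$, then $Ba-C$, then $Ba^2$, then $a$, and hence $B$ and $C$; inverting the bijection of the previous paragraph returns $\alpha,\beta,\gamma_1,\gamma_2$ in closed form. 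The only hypotheses used are the natural non-degeneracy conditions $\tilde I_0>0$ and $\beta,\gamma_2,S_0>0$, which make every division above legitimate.

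The main obstacle is bridging the five discrete samples and the local jet of $D$ at $0$. The derivative computation shows that the analytic trajectory is determined by $(D_0,D'(0),\dots,D^{(4)}(0))$ and that this correspondence inverts cleanly; what remains is to argue that knowing $D$ at $t=0,1,2,3,4$ actually pins down these quantities, equivalently that the sampling map $(A,B,C,a)\mapsto(D(1),D(2),D(3),D(4))$ is injective. I would approach this through the analyticity and monotonicity of the solution of $D'=f(D)$: since $D'=\frac{\gamma_2}{\alpha}\tilde I>0$, the map $t\mapsto D(t)$ is strictly increasing, and its separated form $t=\int_{D_0}^{D(t)}\frac{dD}{f(D)}$ lets one compare trajectories monotonically, reducing the claim to injectivity of the one-variable-at-a-time elimination already set up; the degenerate configurations in which one of the intermediate quantities vanishes would be treated separately. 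This \emph{identifiability} step, rather than the algebra of the first two paragraphs, is the delicate part.
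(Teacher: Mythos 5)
Your reduction to the scalar equation $D'=f(D)$ and the closed-form inversion of $(\alpha,\beta,\gamma_1,\gamma_2)\mapsto(A,B,C,a)$ are correct, and they coincide with the first part of the paper's proof (your $A,B,C,a$ are the paper's $a,c,b,d$). Your triangular computation of $D'(0),D''(0),D'''(0),D^{(4)}(0)$ is also algebraically sound. But this only proves a different theorem: that the parameters are determined by the \emph{4-jet of $D$ at $t=0$}. The hypothesis of the actual statement gives you the five sampled values $D(0),D(1),D(2),D(3),D(4)$, and these do not give you access to any derivative of $D$ at $0$ unless you already know $f$, i.e.\ the parameters. You acknowledge this gap yourself in the third paragraph, but what you offer to close it is not a proof: ``reducing the claim to injectivity of the one-variable-at-a-time elimination already set up'' is circular, since the jet elimination says nothing about the sampling map $(A,B,C,a)\mapsto(D(1),\dots,D(4))$, and ``comparing trajectories monotonically'' is left entirely unexecuted. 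The injectivity of the sampling map is the whole content of the theorem, so as written the proposal proves the easy algebraic half and leaves the essential analytic half open.

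The paper closes exactly this gap with a Rolle-type counting argument that your separated form $t=\int_{D_0}^{D(t)}\frac{ds}{f(s)}$ is the right starting point for. Suppose two parameter quadruples produce trajectories $X,Y$ of $X'=f(X)$, $Y'=g(Y)$ with $X_0=Y_0$ that agree at the five times $t_0=0<t_1<\dots<t_4$. Since both solutions are strictly increasing, the inverse functions $\phi(x)=\int_{X_0}^{x}\frac{ds}{f(s)}$ and $\psi(x)=\int_{X_0}^{x}\frac{ds}{g(s)}$ satisfy $\phi(X(t_i))=t_i=\psi(X(t_i))$, so $\phi-\psi$ has five distinct zeros; Rolle's theorem then gives four distinct zeros of $\tfrac1f-\tfrac1g$, i.e.\ four points where $f=g$. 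Now the specific form $f(x)=a-bx+c(1-e^{-dx})$ pays off: two more applications of Rolle give $f''=g''$ at two points, i.e.\ $cd^2e^{-dx}=\tilde c\tilde d^2e^{-\tilde d x}$ at two distinct points, which forces $d=\tilde d$ and $c=\tilde c$, and then climbing back up gives $b=\tilde b$ and $a=\tilde a$. If you splice this counting argument in place of your third paragraph (keeping your first two paragraphs essentially verbatim), you recover a complete proof; without it, the proposal does not establish the stated result.
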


This result is fundamental for our approach.  It shows that given a number of daily observations, at least $5$ days, we can uniquely determine the parameters of the model.  In practice, given any day $k$ of the pandemic, we will use the previous data on a number of days to determine the parameters. \\ \\
The guiding idea:  We can imagine the map from the daily data to the parameters as a function 
\[
\Phi(data_{gen})=(\beta, \gamma_1,\gamma_2,\alpha)\text{ where } data_{gen}=(\tilde{I}_0,\tilde{R}_0, D(0),D(1),\dots,D(4))
\]
generated by \eqref{sird_adj}.   The theorem ensures that this function is well defined on the set 
\[
Data_{gen}=\{(\tilde{I}_0,\tilde{R}_0, D(0),D(1),\dots,D(4)) \text{ solution of \eqref{sird_adj}  for some } (\beta, \gamma_1,\gamma_2,\alpha)\}.
\]
At this point, given an arbitrary data point $data=(\tilde{I}_0,\tilde{R}_0, D(0),D(1),\dots,D(4))$ it is difficult to check that this belongs to $Data_{gen}$.  Thus our goal is to find the best approximation of the $data$ with a point in $Data_{gen}$.  

In general this is achieved using projection methods, for instance non-linear least square, as it is done in \cite{sardar2020assessment}.  In our case we consider an extension problem, rather than the projection method, through the method of neural networks trained on simulated data, using the SIRD model.  These are functions which construct approximations of $\Phi$ defined on the set $Data_{gen}$, that can naturally extend to the whole space, thus can be interpreted as (approximate) extrapolations of $\Phi$ to the whole space.  As we will describe below, one single neural network did not work very well in our numerical experimentation, while an ensemble of neural networks achieve a better performance.  

Numerical simulations show that we get more robust results by considering a larger number of days for the deceased.  We noticed that $7$ days instead of $5$ give more robust results.  Also we exploit $2$ days of data for infected and recovered to strengthen the robustness. 

One of our main findings is that the prediction works very well as it is shown in the next pictures.  For each day $k$ we predict using our method the next 10 days and take the average for each category.  These are plotted along the averages of the real data for 10 days starting from day $k$.

\begin{figure}[H]
  \centering
 \includegraphics[width=1\linewidth]{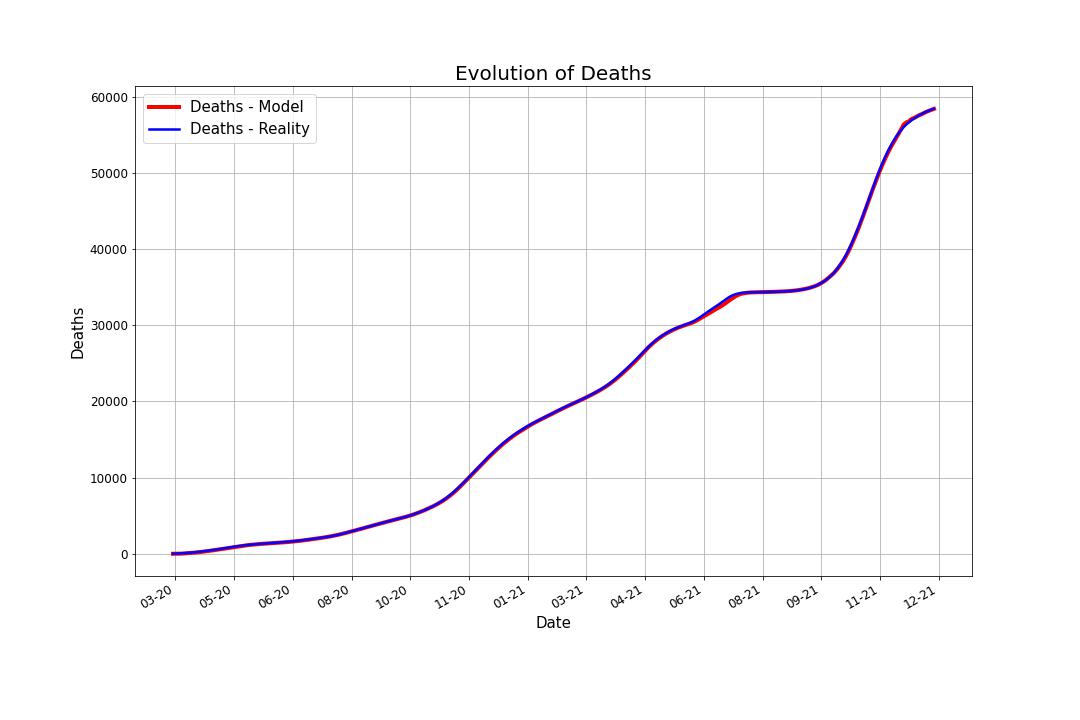}
   \caption{Averages of deaths using 10 future days for predicted and real data.  The red value at time $k$ is computed as the average of future 10 days predictions, while the blue value at time $k$ is the average of the 10 future days values.}
   \label{f:2:1}
\end{figure}

\begin{figure}[H]
  \centering
  \includegraphics[width=0.8\linewidth]{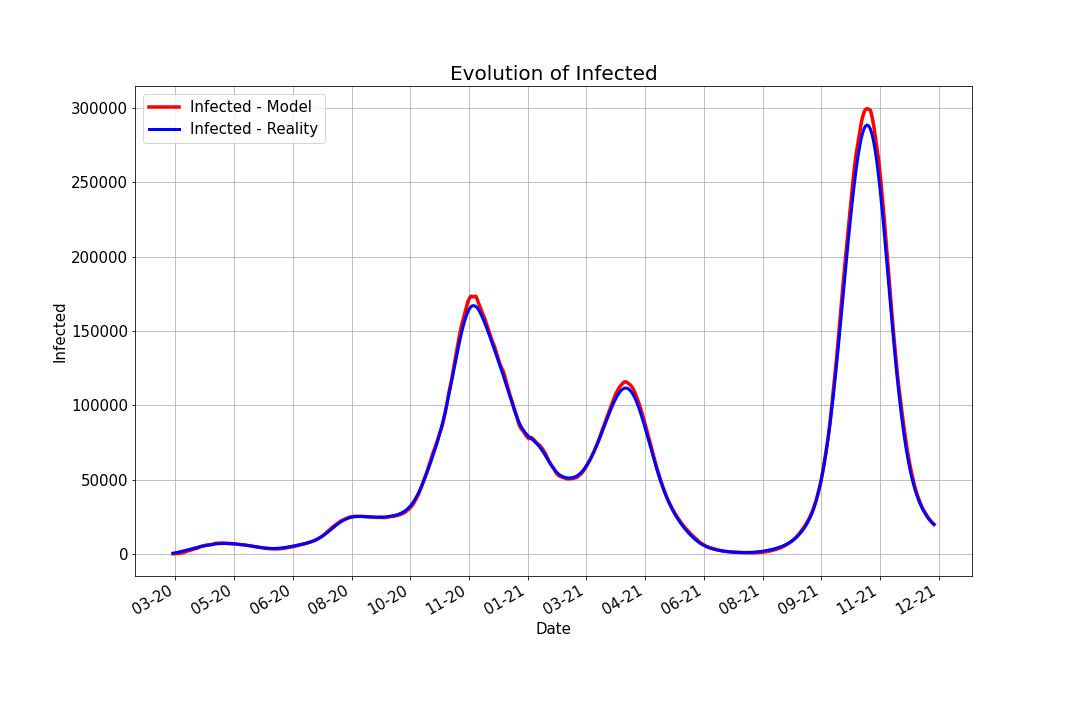}
  \caption{Averages of infected using 10 future days for predicted and real data.  The red value at time $k$ is computed as the average of future 10 days predictions, while the blue value at time $k$ is the average of the 10 future days values.}
   \label{f:2:1:1}
\end{figure}

\begin{figure}[H]
  \centering
 \includegraphics[width=0.8\linewidth]{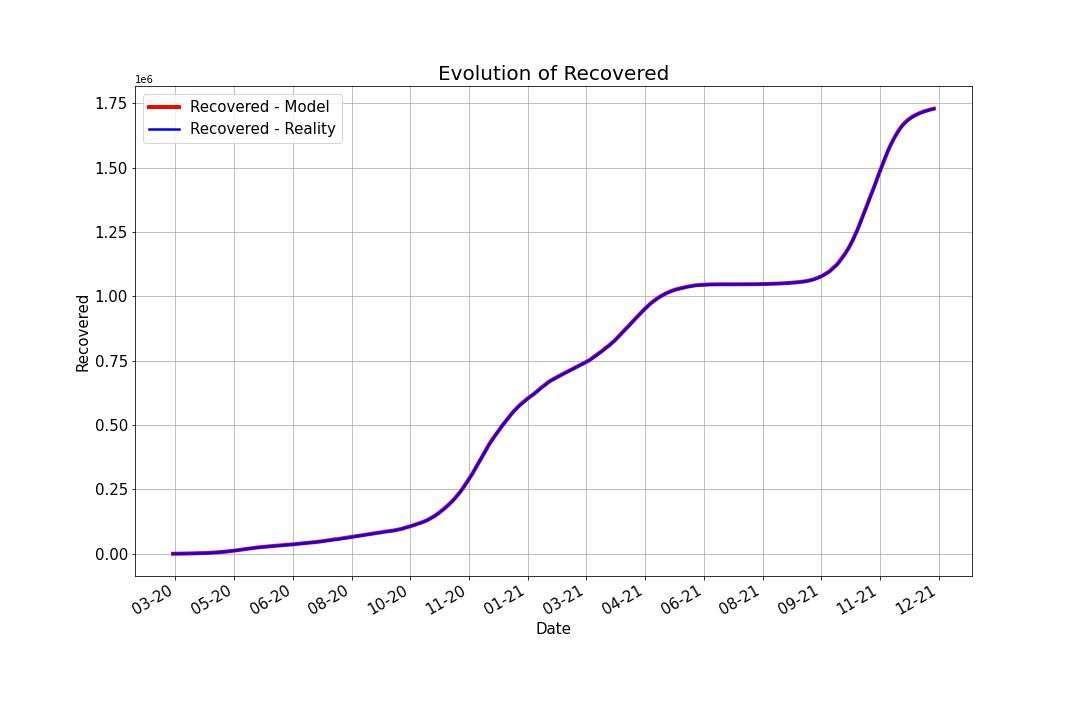}
   \caption{Averages of recovered using 10 future days for predicted and real data.  The red value at time $k$ is computed as the average of future 10 days predictions, while the blue value at time $k$ is the average of the 10 future days values.}
   \label{f:2:1:2}
\end{figure}

We can observe from Figures~\ref{f:2:1}, ~\ref{f:2:1:1}, ~\ref{f:2:1:2}  that the prediction power is excellent for 10 forward days.  By computing the MAE (Mean Absolute Error) for different timeframe predictions and reality data we notice that this observation is validated:

\begin{center}
\begin{tabular}{ |c|c|c|c| } 
\hline
Prediction & Case & MAE \\
\hline
\multirow{3}{*}{10 days prediction } & Deaths & 66.77549 \\ 
& Infected  & 1635.88403 \\ 
& Recovered & 752.07601 \\ 
\hline
\multirow{3}{*}{30 days prediction } & Deaths & 212.02360 \\ 
& Infected  & 13731.60866 \\ 
& Recovered & 4733.52849 \\ 
\hline
\multirow{3}{*}{45 days prediction } & Deaths & 531.00102 \\ 
& Infected  & 32418.20868 \\ 
& Recovered & 14693.97709 \\ 
\hline
\end{tabular}
\end{center}

We detail and discuss more on these predictions in the next Section.

\section{The Data and the Neural Networks}\label{s:2}

In this section we present our strategy for the data cleaning and the estimation of the parameters $\beta,\gamma_1,\gamma_2,\alpha$ of the SIRD model. 

We clean the raw data which has several anomalies and we use a regularization by averaging.

The basic strategy is the following.  Based on the assumed SIRD model we generate data for $7$ days and then train a neural network which learns the parameters from the data for this $7$ days time interval.

Then, using the real data and the neural network we  find the parameters in a dynamical way for any $7$ consecutive days.  Given these $7$ days we can predict based on the model what is going to happen on the next few days.  

In a real world the parameters do not stay constant, they change dynamically and we would like to catch part of this behavior. 

\subsection{Data}\label{s:data}

We took the data from \url{https://datelazi.ro} which keeps a record of all the data during the pandemic in Romania.  

During the pandemic, the reported numbers and the methodology regarding the reporting changed several times causing delays or bad reporting.  In October 2020, the definition of a recovered person changed thus causing a data anomaly in the reported number of recovered. Particularly, we can see a spike of $44000$ new cases from one day to another, equivalent to the cumulative number of cases until then.  To alleviate this anomaly, we distribute the extra number of cases proportionally to the previous days.  

There are also periods of time in which the number of recovered people is actually $0$ for almost three weeks.  

We further analyse the data of the 102 weeks taken into assessment by day of the week and we compute the average of the reported number of infected.

\begin{center}
\begin{tabular}{|l|l|}
\hline
\multicolumn{1}{|c|}{\textit{Day of the week}} & \multicolumn{1}{c|}{\textit{Average number of infected (reported)}} \\ \hline
Monday                              & 2461                                  \\ \hline
Tuesday                             & 4479                                  \\ \hline
Wednesday                           & 4525                                  \\ \hline
Thursday                            & 4453                                  \\ \hline
Friday                              & 4269                                  \\ \hline
Saturday                            & 4041                                  \\ \hline
Sunday                              & 2909                                  \\ \hline
\end{tabular}
\end{center}

We notice a significant difference between the average numbers reported on weekdays and weekends and we perform an \emph{One Way Anova Test} to validate this observation.\\

\begin{tabular}{|l|l|l|llll}
\hline
\multicolumn{1}{|c|}{\textit{Source   of Variation}} & \multicolumn{1}{c|}{\textit{SS}} & \multicolumn{1}{c|}{\textit{df}} & \multicolumn{1}{c|}{\textit{MS}} & \multicolumn{1}{c|}{\textit{F}} & \multicolumn{1}{c|}{\textit{P-value}} & \multicolumn{1}{c|}{\textit{F crit}} \\ \hline
Between Groups                                       & 4.32E+08                         & 6                                & \multicolumn{1}{l|}{72029925}    & \multicolumn{1}{l|}{2.286333}   & \multicolumn{1}{l|}{0.034118}         & \multicolumn{1}{l|}{2.111386}        \\ \hline
Within Groups                                        & 2.23E+10                         & 707                              & \multicolumn{1}{l|}{31504561}    & \multicolumn{1}{l|}{}           & \multicolumn{1}{l|}{}                 & \multicolumn{1}{l|}{}                \\ \hline
                                                     &                                  &                                  &                                  &                                 &                                       &                                      \\ \cline{1-3}
Total                                                & 2.27E+10                         & 713                              &                                  &                                 &                                       &                                      \\ \cline{1-3}
\end{tabular}\\

We found a statistically significant difference between the averages of the reported cases according to the day of the week ($p < 0.05$).  However, by law, the methodology of reporting from the health centers allows reporting cases within an interval of two weeks.  In order to mitigate the above deficiencies, we replaced the data at time $t$ with the average of the data during the previous two weeks preceding time $t$.  We present in Figure~\ref{i:2} the data before and after the cleaning.  

\begin{figure}[H]
  \centering
 \includegraphics[width=\linewidth]{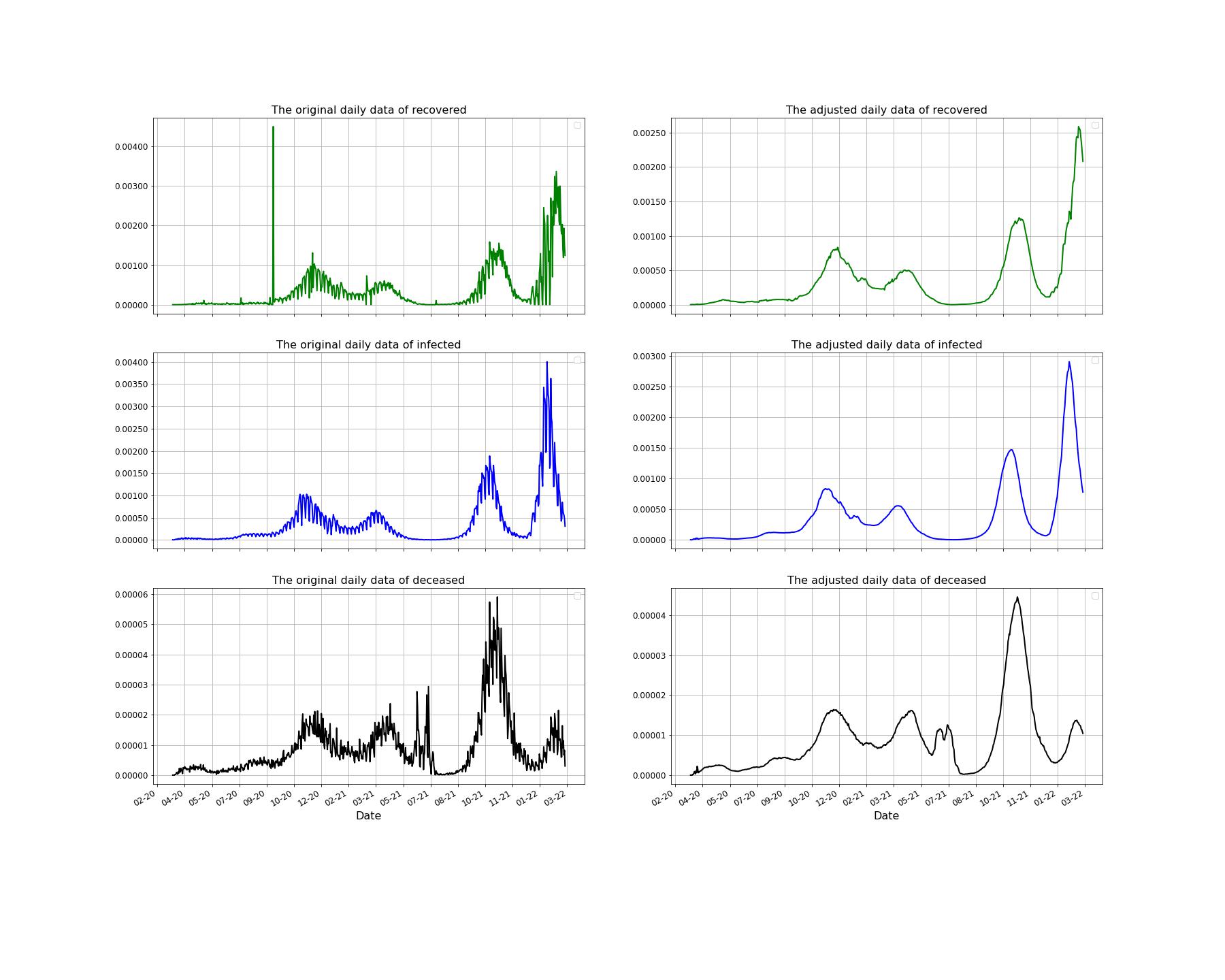}
   \caption{  Data adjustments according to the methodology of reporting the Covid19 numbers in Romania.  The rows describe the data (from top to bottom) of recovered, infected and dead.  The left column represents the raw data and the right column represents the adjusted data as we described above.  Notice the scale and the spike in the first picture which is adjusted as we pointed out.  The data we work with is scaled by $10,000,000$.}
   \label{i:2}
\end{figure}

\subsection{The neural networks}  

To deal with the estimation of the parameters, we follow two main steps. 

The first one is the generation of data. 

We take the range of $\beta$ in the interval $J_\beta=(0,1)$, for $\gamma_1$ we consider the range in $J_{\gamma_1}=(0,1)$, for $\gamma_2$ we consider the interval $J_{\gamma_2}=(0,0.01)$ and for $\alpha$ we consider the interval $J_{\alpha}=(0.01,1)$.  Next we split each of these intervals into $7$ sub-intervals of the same size which we will index accordingly as

$$J_{\beta,i}=(i/7,(i+1)/7)$$ 
$$J_{\gamma_1,i}=(i/7,(i+1)/7))$$ 
$$J_{\gamma_2,i}=(0.01*i/7,0.01*(i+1)/7)$$ 
$$J_{\alpha, i}=(0.01+0.99*i/7,0.01+0.99*(i+1)/7)$$
for $i\in \{0,1,\dots, 6 \}$. The splitting is motivated by the fact that we want to have good representation of the parameters and at the same time we want to avoid concentration of the parameters in one single region. We tried previously to use a simple uniform choice for each parameter in the whole interval, but we run into the problem of misrepresentation of small values of the parameters. It seems that this phenomena is due to some form of concentration of measure which is alleviated by using this splitting method.  With this strategy we also avoid the overfitting problem of the neural networks.  Therefore we obtain 7 sub-intervals for each of the 4 parameters which means that we get $7^4=2401$ combinations of sub-intervals. 

Next, to generate the data we apply the following procedure:    
\begin{enumerate}
    \item Create the data set $\Delta$ to store the values obtained in the next steps
    \item For $i_1\in \{0,1,\dots,6\}$, $i_2\in \{0,1,\dots,6\}$, $i_3\in \{0,1,\dots,6\}$, $i_4\in \{0,1,\dots,6\}$:
    \begin{enumerate}
    \item For $j \in\{1,\dots, 10000\}$ pick at random 
    \begin{enumerate}
        \item $\beta\in J_{\beta, i_1}$, 
        \item $\gamma_1\in J_{\gamma_1,i_2}$, 
        \item $\gamma_2\in J_{\gamma_2,i_3}$ 
        \item $\alpha\in J_{\alpha,i_4}$, 
        \item $I_0$ in the interval $(0,0.2)$, 
        \item $R_0$ in $(0,0.6)$ 
        \item $D_0$ in $(0,0.007)$ 
    \end{enumerate}
    
    \item Solve the system \eqref{sird_adj} with all the parameters from step 1 and 2 for the time interval $[0,7]$ and add the row of $(I_0,I_1,\dots, I_7,R_0,R_1,\dots,R_7,D_0,D_1,\dots, D_7)$ to $\Delta$.
    \end{enumerate}
\end{enumerate}

For each $i\in \{1,2,\dots,10\}$, we create a training sample $B_i$ from the dataset $\Delta$ of size $70\%$ chosen at random without replacement.  Using $B_i$ sample we train a neural network, $\mathrm{NN}_i$, $i\in \{1,2,\dots,10\}$, having as input:
\[
XTrain=(I_0,I_1,I_2,R_0,R_1,R_2,D_0,D_1,D_2,D_3,D_4,D_5,D_6,D_7)
\]
and our parameters
\[
YTrain=(\beta,\gamma_1,\gamma_2,\alpha)
\]
as output.

According to our Theorem~\ref{t:2} we know that we can recover the parameters $\beta, \gamma_1,\gamma_2, \alpha$ only from  $I_0,R_0,D_0,D_1,D_2,D_3,D_4$, from our dataset.  However, we choose to use more data because the estimates are more robust.  This choice can also be interpreted as a regularization which decreases the training/test loss.  

The next step is the construction of the neural networks.  We performed various tests, with different type of architectures, ones with multiple hidden layers and large number of neurons, as well as some architectures with 1-2 hidden layers and small number of neurons.  Based on these tests we draw the conclusion that the large ones are expensive to train while the small ones do not perform very well.  We decided to mitigate these disadvantages by choosing the below architecture, which is a medium one when it comes to its complexity. It helps us achieve great results at good performance with a reasonable consumption of resources. The training was performed on an Intel(R) Core(TM) i9-10885H CPU @ 2.40GHz x 16. The architecture of the neural network we use is of the following form:
\begin{itemize}
    \item Layers:
        \begin{enumerate}
        \item Dense 64, activation function 'ReLU', input dimentsion=14 
        \item Dense 128, activation function 'ReLU'
        \item Dense 256, activation function 'ReLU'
        \item Dense 512, activation function 'ReLU'
        \end{enumerate}
    \item One output for each parameter, $(\beta, \gamma_1,\gamma_2,\alpha)$.
    \item Loss: Mean Absolute Error
    \item Optimizer: Adam (Kingma and Ba, \cite{adam_ml} )
\end{itemize}

The size of the training, respectively test data split for $\mathrm{NN}_i$ is $80\%$, respectively $20\%$ from the sample $B_i$.

After training the neural networks, the predictions of our parameters is made by averaging the predictions from all the individual neural networks on the real data

$$(\hat{\beta},\hat{\gamma_1},\hat{\gamma_2},\hat{\alpha})=\frac{1}{10}\sum_{i=1}^{10}\mbox{NN}_i(data)$$

With this approach, we can achieve better performance of our model because we manage to decrease the variance, without increasing the bias. Usually, the prediction of a single neural network is sensitive to noise in the training set, while the average of many neural networks that are not correlated, is not sensitive.  Bootstrap sampling is a good method of de-correlating neural networks, by training them with different training sets. If we train many neural networks on the same dataset, we will obtain strongly correlated neural networks.

In Figure~\ref{i:4} we present the results of the estimated parameters $\beta, \gamma_1,\gamma_2,\alpha$.

\begin{figure}[H]
  \centering
  \begin{subfigure}[b]{0.48\linewidth}
   \includegraphics[width=\linewidth]{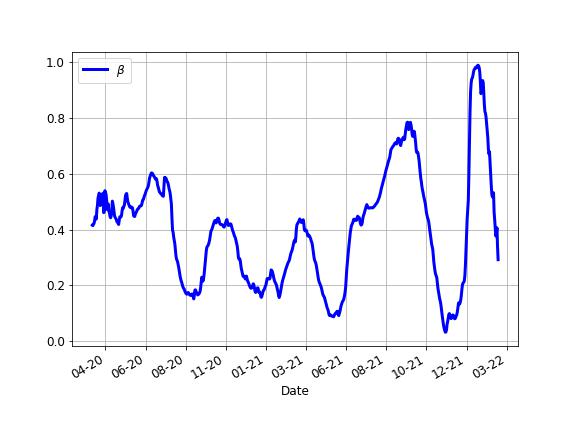}
  \end{subfigure}
  \begin{subfigure}[b]{0.48\linewidth}
   \includegraphics[width=\linewidth]{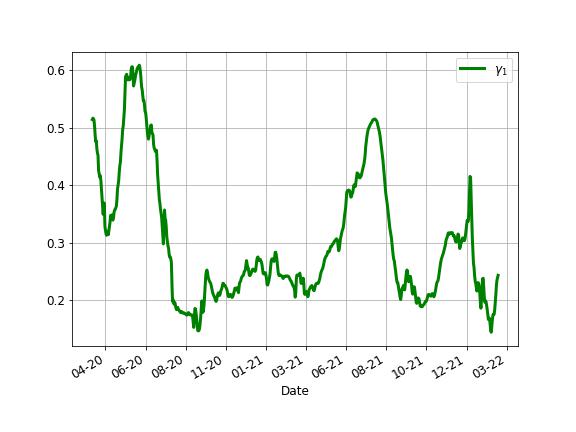}
  \end{subfigure}
  \begin{subfigure}[b]{0.48\linewidth}
   \includegraphics[width=\linewidth]{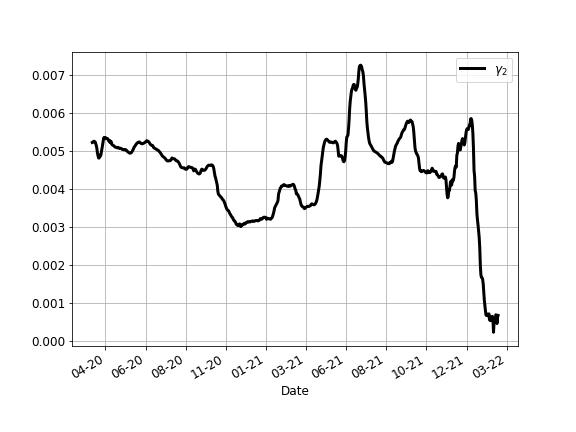}
  \end{subfigure}
  \begin{subfigure}[b]{0.48\linewidth}
   \includegraphics[width=\linewidth]{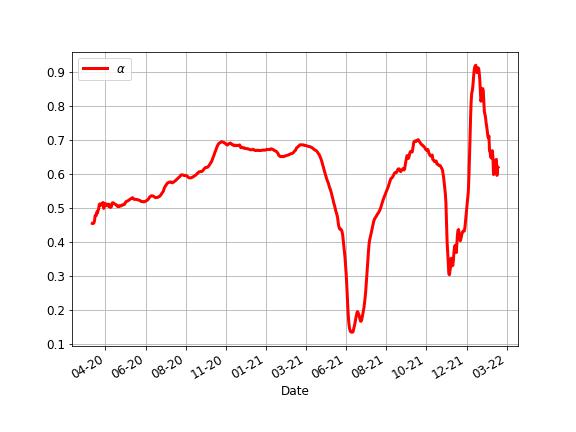}
  \end{subfigure}
  \caption{Parameters of the spreading of Covid19 in Romania estimated using the above averaging of the neural networks.  Notice the behavior of the parameters $\beta, \gamma_1,\gamma_2$ which tend to decrease over the period of almost two years.  Interestingly we see the parameter $\alpha$ having large values during the Fall of 2020 and lower values during the summer of 2021.  This suggests that the proportion of real infected people is between $[1/0.9, 1/0.1]=[1.11,10]$ to the reported infected.  This show that roughly only half of the infected get reported.}
   \label{i:4}
\end{figure}

Knowing the parameters for the model $(\beta_k,\gamma_{1,k},\gamma_{2,k},\alpha_k)$ at each time $k$ and the values $(I_k,R_k,D_k)$, from the real data, we can generate the predictions $P_{k,0},\dots,P_{k,10}$ using the system \eqref{sird_adj} for the time interval $[k,k+10]$. We take the average of $P_{k,0},\dots,P_{k,10}$ and call this $P_k$. 

In Figure~\ref{f:2:2} we plot for each day $k$ the average of the real (smoothed) data for 10 days starting with $k$ alongside with the average $P_k$ computed above.  As we already pointed out, the fit is very good.  

\begin{figure}[H]
  \centering
  \begin{subfigure}[b]{0.49\linewidth}
   \includegraphics[width=\linewidth]{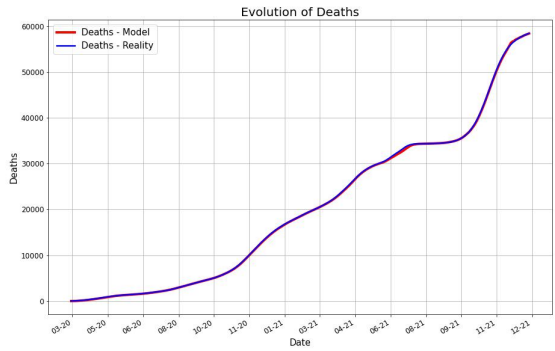}
   \caption{Predicted and real deaths.}
  \end{subfigure}
  \begin{subfigure}[b]{0.49\linewidth}
   \includegraphics[width=\linewidth]{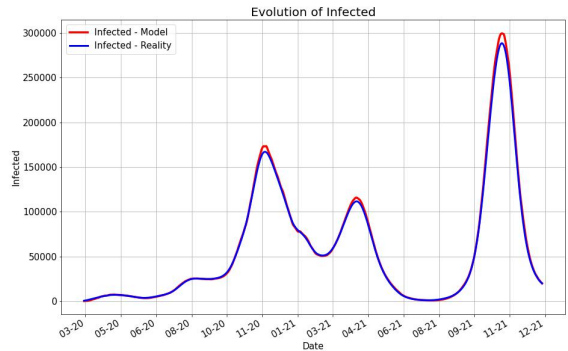}
   \caption{Predicted and real infected.}
  \end{subfigure}
  \begin{subfigure}[b]{0.49\linewidth}
   \includegraphics[width=\linewidth]{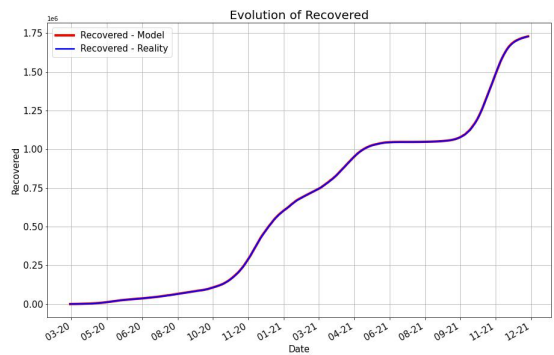}
    \caption{Predicted and real recovered.}
  \end{subfigure}
  \caption{The plots of the real data and the predicted averages for the next 10 days.  As we described above, for each day $k$ we compute the average of the real data for the next $10$ days and the average of the predicted data for the next $10$ days.  Notice the important fact that each prediction is made in terms of the previous $7$ days.  The close match suggests a very good prediction power of our approach.  A slight difference appears in the case of the infected number of people during the forth wave of the pandemic, namely the Fall of 2021.}
  \label{f:2:2}
\end{figure}

The next images, in Figure~\ref{f:2:3}, show the prediction on the death for 30 and 45 days.  In many cases the prediction is good, however there are regions in which the prediction ceases to be accurate. This highly depends on the timeframe we chose to make the predictions.

\begin{figure}[H]
  \centering
  \begin{subfigure}[b]{0.9\linewidth}
   \includegraphics[width=\linewidth]{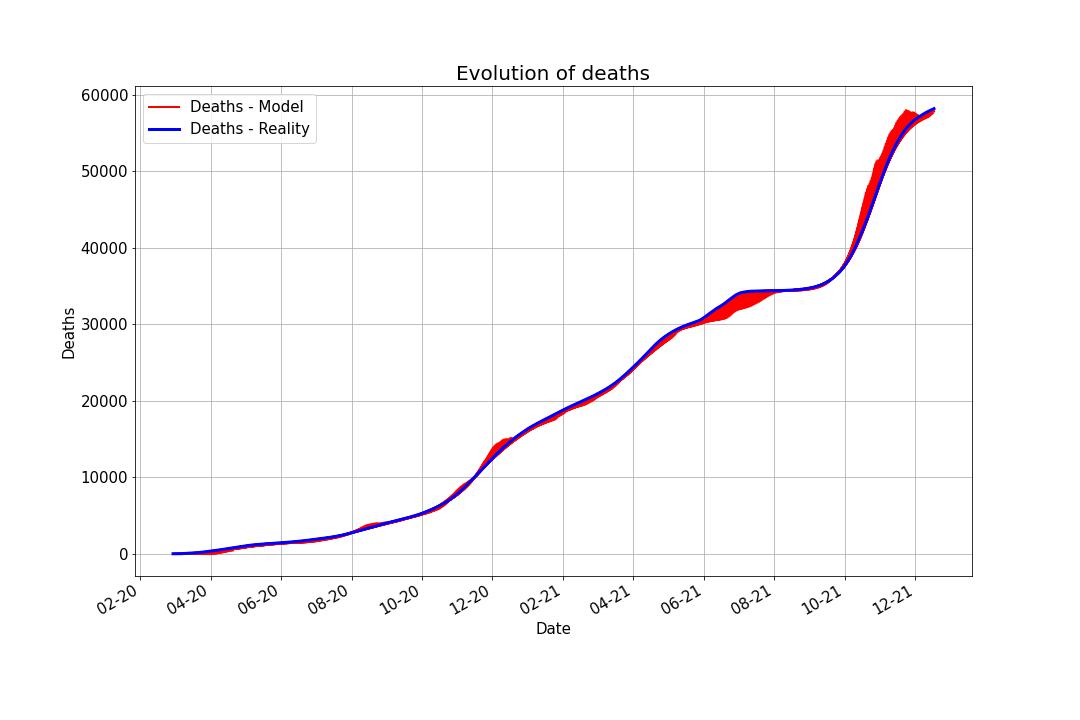}
   \caption{For each day we predicted the deaths for 30 days.  }
  \end{subfigure}
 \begin{subfigure}[b]{0.9\linewidth}
   \includegraphics[width=\linewidth]{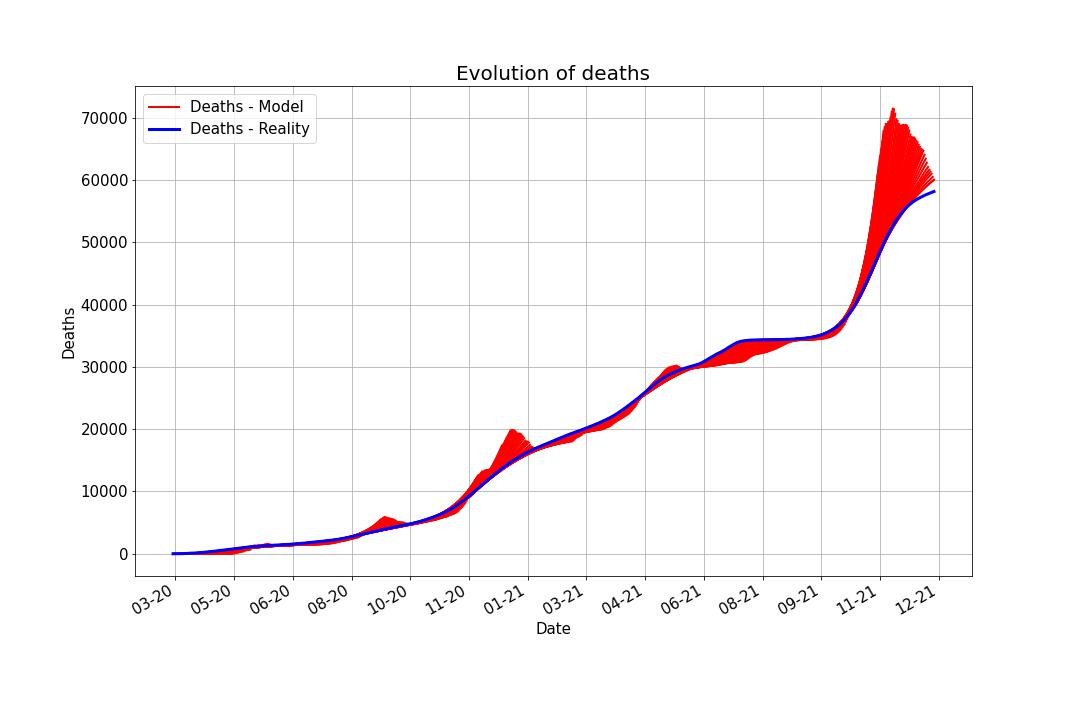}
   \caption{For each day we predicted the deaths for 45 days.  }
  \end{subfigure}
  \caption{In both pictures, in blue is the real (reported) number of deaths.  For each day $k$, we plotted, in red, the prediction of the deaths, starting with day $k$. The first picture shows the prediction is plotted for 30 days, while the second shows the predictions plotted for 45 days.  Remark that the 30 days prediction is much better than the 45 days prediction.}  
    \label{f:2:3}

\end{figure}

Next, in Figures~\ref{f:2:5},\ref{f:2:6}, we look in more details at the images above, to see the refined structure of the behavior.  We do this for 10 days versus 30 days starting at different moments of time.

\begin{figure}[H]
  \centering
  \begin{subfigure}[b]{0.32\linewidth}
   \includegraphics[width=\linewidth]{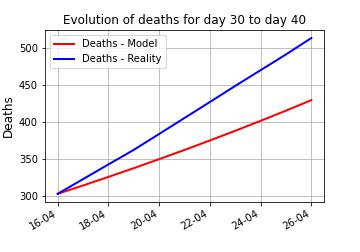}
  \end{subfigure}
  \begin{subfigure}[b]{0.32\linewidth}
   \includegraphics[width=\linewidth]{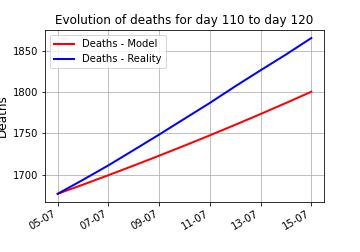}
  \end{subfigure}
  \begin{subfigure}[b]{0.32\linewidth}
   \includegraphics[width=\linewidth]{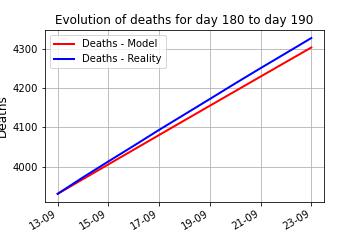}
  \end{subfigure}
  \begin{subfigure}[b]{0.32\linewidth}
   \includegraphics[width=\linewidth]{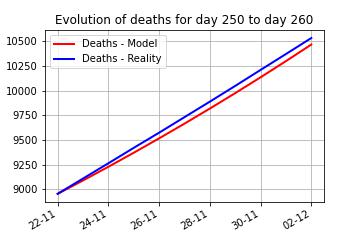}
  \end{subfigure}
  \begin{subfigure}[b]{0.32\linewidth}
   \includegraphics[width=\linewidth]{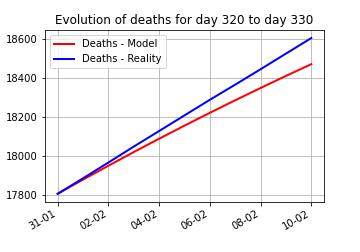}
  \end{subfigure}
  \begin{subfigure}[b]{0.32\linewidth}
   \includegraphics[width=\linewidth]{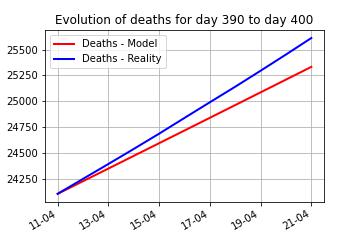}
  \end{subfigure}
  \begin{subfigure}[b]{0.32\linewidth}
   \includegraphics[width=\linewidth]{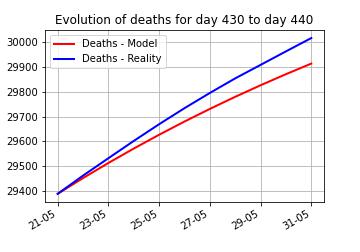}
  \end{subfigure}
  \begin{subfigure}[b]{0.32\linewidth}
   \includegraphics[width=\linewidth]{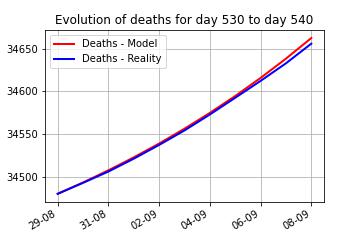}
  \end{subfigure}
    \begin{subfigure}[b]{0.32\linewidth}
   \includegraphics[width=\linewidth]{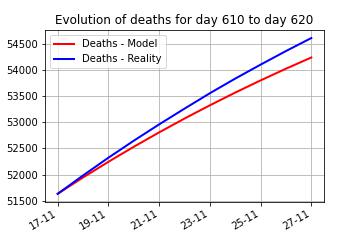}
  \end{subfigure}
  \caption{The evolution of the predicted 10 days starting with the days 30, 110, 180, 250, 320, 390, 430, 530, 610}
  \label{f:2:5}
\end{figure}

\begin{figure}[H]
  \centering
  \begin{subfigure}[b]{0.32\linewidth}
   \includegraphics[width=\linewidth]{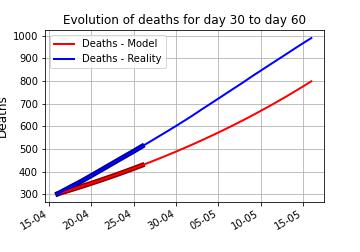}
  \end{subfigure}
  \begin{subfigure}[b]{0.32\linewidth}
   \includegraphics[width=\linewidth]{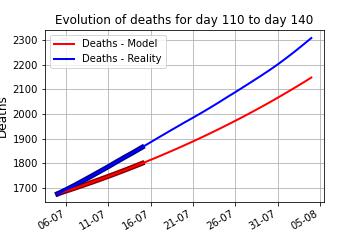}
  \end{subfigure}
  \begin{subfigure}[b]{0.32\linewidth}
   \includegraphics[width=\linewidth]{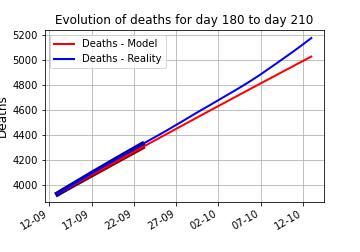}
  \end{subfigure}
  \begin{subfigure}[b]{0.32\linewidth}
   \includegraphics[width=\linewidth]{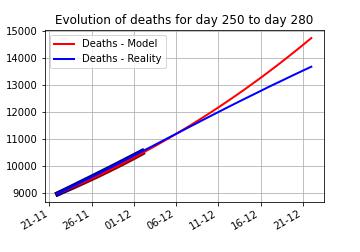}
  \end{subfigure}
  \begin{subfigure}[b]{0.32\linewidth}
   \includegraphics[width=\linewidth]{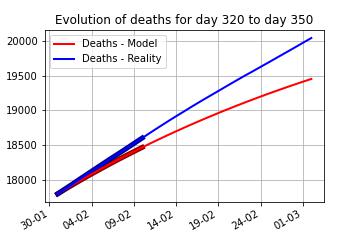}
  \end{subfigure}
  \begin{subfigure}[b]{0.32\linewidth}
   \includegraphics[width=\linewidth]{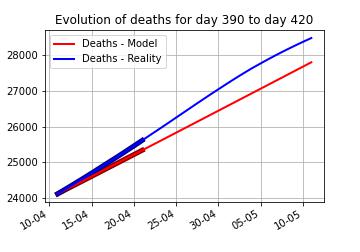}
  \end{subfigure}
  \begin{subfigure}[b]{0.32\linewidth}
   \includegraphics[width=\linewidth]{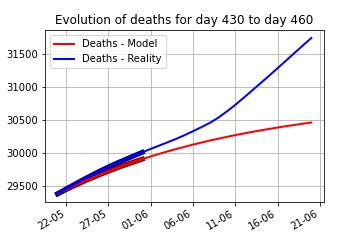}
  \end{subfigure}
  \begin{subfigure}[b]{0.32\linewidth}
   \includegraphics[width=\linewidth]{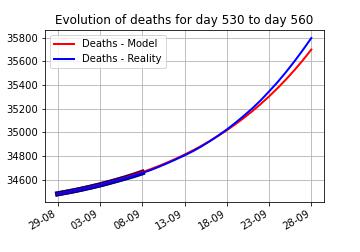}
  \end{subfigure}
    \begin{subfigure}[b]{0.32\linewidth}
   \includegraphics[width=\linewidth]{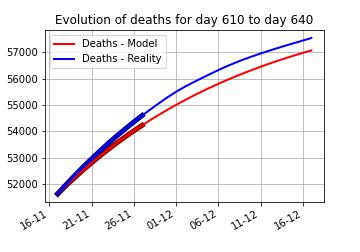}
  \end{subfigure}
  \caption{The evolution of the predicted 30 days starting with the days 30, 110, 180, 250, 320, 390, 430, 530, 610. The predictions for the first 10 days were highlighted. This reinforces that the predictions are good for short periods of time and they loose the prediction power on longer periods of time.  The main lesson we learn from the above figure is the fact that the parameters are not constant in time.}
  \label{f:2:6}
\end{figure}

\section{Discussion and Conclusions}\label{s:6}

The dynamic of an infectious disease is highly impacted by numerous factors, including the measures imposed by governments or the attitude of population towards it, as the COVID-19 pandemic has shown. Therefore, it is very unlikely that the parameters of a model designed to asses the spread of a virus are constant over time. We have to be aware of the fact that, in the long term, the prediction is affected by all the restrictions/relaxations taken by most of the countries.  We believe that this methodology has a high degree of generality to be used in many other cases of prediction, particularly useful for those cases where the prediction depends on many other factors which change the behavior of the model. 

We introduced on our model an extra parameter which accounts for the proportion of the infected and reported population versus the whole infected population.  The point being that not every infected person is actually tested or reported.  Thus the actual number of infected people should be higher.

We do not account for the vaccination campaign, though this does not affect our model since we are looking at the parameters on relatively short periods of time.  The vaccination should in principle change the parameters, which is in fact exactly what we look for. 

Regarding the limitations of the methodology presented in this paper, we can mention the followings. When it comes to the applicability of the model, one limitation could be that the number of recovered people is not included in the reporting by all of the countries, which would make the data incomplete.  In the same time, one other limitation is caused by the changes that could appear in the reporting methodology of a specific country, such as the change of the definitions of infected/ recovered, that can have an impact in the results. 

In order to test if this technique can be well generalized, we applied it to Covid19 data of 3 other countries: Hungary, Czech Republic and Poland.  By replicating the approach, similar to Romania case, we are confident that the predictive model that we presented in this paper can be also applied to other countries in order to identify the parameters of the model and to accurately assess the transmission dynamic of the pandemic, other infectious diseases or other compartmental models. 

\section{Declaration}

\subsection{Ethics approval} We did not use any confidential data for the analysis in this paper and we do not have any ethical issues in this paper.

\subsection{Consent for publication}

We did not use any data which could possibly reveal any personal data of any patient.

\subsection{Availability of data and material} We used the public data from \href{https://datahub.io/core/covid-19/r/3.html}{here}.

\subsection{ Competing interests } The authors declare that they have no competing interests.

\subsection{ Funding } There are no funding sources for this paper.

\subsection{Authors' contributions}  All authors contributed equally to this paper.

\subsection{Acknowledgement}  The last author would like to thank Iulian Cimpean, Lucian Beznea and Mihai N. Pascu for interesting discussions about this paper.

\section{Appendix 1}\label{a:1}

The goal of this section is to provide the proof of Theorem~\ref{t:2}. Recall the system \eqref{sird_adj} given by

\begin{equation}\label{sird_adj_a}
\begin{cases}
\frac{d S(t)}{dt}=-\frac{\beta}{\alpha} S \tilde{I}\\
\frac{d\tilde{I}(t)}{dt}=\beta S \tilde{I} -(\gamma_1+\gamma_2) \tilde{I} \\
\frac{d\tilde{R}(t)}{dt}=\gamma_1 \tilde{I}\\
\frac{dD(t)}{dt}=\frac{\gamma_2}{\alpha} \tilde{I}.
\end{cases}
\end{equation}

The statement of Theorem~\ref{t:2} is the following.

\begin{theorem} Given $\tilde{I}(0), \tilde{R}(0)$ and  $D(0),D(1),D(2),D(3),D(4)$ we can uniquely determine the parameters $\alpha, \beta,\gamma_1,\gamma_2$ of \eqref{sird_adj_a}.
\end{theorem}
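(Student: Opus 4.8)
The plan is to collapse the four-dimensional system \eqref{sird_adj_a} into a single scalar autonomous ODE for $D$ and then read the parameters off from the observations of $D$. The starting point is that the last equation gives $\tilde I(t)=\frac{\alpha}{\gamma_2}D'(t)$, so $\tilde I$ is slaved to $D'$. Feeding this into the $\tilde R$ equation and integrating yields $\tilde R(t)=\tilde R(0)+\frac{\gamma_1\alpha}{\gamma_2}\bigl(D(t)-D(0)\bigr)$, while dividing the $S$ equation by $S$ and again using $\tilde I=\frac{\alpha}{\gamma_2}D'$ gives $(\ln S)'=-\frac{\beta}{\gamma_2}D'$, hence the exact closed form $S(t)=S(0)\exp\!\bigl(-\tfrac{\beta}{\gamma_2}(D(t)-D(0))\bigr)$. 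Crucially $S(0)$ is not free: the conservation law \eqref{e:SIRD_ajd_2} fixes $S(0)=1-\frac1\alpha(\tilde I(0)+\tilde R(0))-D(0)$ in terms of the given data and $\alpha$.

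Substituting these expressions back into \eqref{e:SIRD_ajd_2} eliminates $S,\tilde I,\tilde R$ and produces a first-order autonomous equation
\[
D'(t)=G(D(t)),\qquad G(x)=A-Bx-Ce^{-kx},
\]
where $k=\beta/\gamma_2$, $B=\gamma_1+\gamma_2$, $A=\gamma_2\bigl(1-\tfrac1\alpha\tilde R(0)\bigr)+\gamma_1 D(0)$ and $C=\gamma_2 S(0)e^{kD(0)}$. Thus the identification problem factors through recovering the four reduced constants $(A,B,C,k)$ from the data, followed by an algebraic back-substitution to $(\alpha,\beta,\gamma_1,\gamma_2)$. The back-substitution is the easy half: $\tilde I(0)=\frac{\alpha}{\gamma_2}G(D(0))$ fixes the ratio $\alpha/\gamma_2$, after which $B=\gamma_1+\gamma_2$ together with the formula for $A$ yields a linear $2\times2$ system for $(\gamma_1,\gamma_2)$ whose determinant is $1-D(0)\neq0$; then $\beta=k\gamma_2$ and $\alpha=(\alpha/\gamma_2)\gamma_2$ follow.

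The core of the argument is therefore to show that the five numbers $D(0),D(1),D(2),D(3),D(4)$ determine $(A,B,C,k)$ uniquely. Since $D$ is real-analytic and (because $D'=\frac{\gamma_2}{\alpha}\tilde I\ge0$) monotone, these five values pin down the Taylor data $D(0),D'(0),D''(0),D'''(0),D''''(0)$ at the initial time, and the chain rule applied to $D'=G(D)$ converts them into $G$ and its first three derivatives at $x_0=D(0)$. For the specific form of $G$ one computes $G''(x_0)=-k^2Ce^{-kx_0}$ and $G'''(x_0)=k^3Ce^{-kx_0}$, so $k=-G'''(x_0)/G''(x_0)$ is recovered first, then $C$ from $G''(x_0)$, then $B$ from $G'(x_0)$, and finally $A$ from $G(x_0)$.

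The main obstacle is exactly this last passage: turning ``five sampled values of $D$'' into the derivative data rigorously rather than heuristically. I expect the cleanest route is to argue injectivity of the sampling map directly: if two parameter tuples produce identical $D(0),\dots,D(4)$, both solve an autonomous equation $x'=G(x)$ through the same initial point, and I would use the monotonicity of $D$ together with the rigidity of the family $\{A-Bx-Ce^{-kx}\}$ — via the transcendental travel-time relations $\int_{D(0)}^{D(j)}dx/G(x)=j$ for $j=1,2,3,4$ — to force $G=\tilde G$, hence equal reduced constants. Care is also needed to exclude the degenerate cases where the elimination breaks down, namely $G''(x_0)=0$ (i.e.\ $C=0$ or $k=0$) and $G(x_0)=0$, which are ruled out by the standing assumptions $\beta,\gamma_2>0$ and $\tilde I(0)>0$.
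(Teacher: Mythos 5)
Your reduction to the scalar autonomous equation $D'=G(D)$ with $G(x)=A-Bx-Ce^{-kx}$, and your algebraic back-substitution from $(A,B,C,k)$ to $(\alpha,\beta,\gamma_1,\gamma_2)$, are both correct and coincide with the paper's (the paper writes the same family as $a-bx+c(1-e^{-dx})$ in the shifted variable $x=D-D_0$). But the core of the theorem --- that the five samples $D(0),\dots,D(4)$ determine $(A,B,C,k)$ uniquely --- is never actually proved in your proposal. Your first route, converting the five sampled values into the jet $D'(0),D''(0),D'''(0),D''''(0)$, is invalid, as you yourself concede: five values of an analytic function at integer times do not determine its derivatives at $t=0$, and monotonicity does not help. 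Your fallback names the right objects (the travel-time relations $\int_{D(0)}^{D(j)}dx/G(x)=j$) but then invokes ``the rigidity of the family $\{A-Bx-Ce^{-kx}\}$'' as the thing that would finish the argument. That rigidity \emph{is} the theorem; stating that you ``would use'' it to force $G=\tilde G$ leaves a hole exactly where the mathematical content lies.

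The paper closes this hole with a Rolle ladder you could adopt essentially verbatim. Suppose two members $f,g$ of the family produce the same samples. The solutions are strictly increasing where $f>0$, so the five common values $X(t_j)$, $j=0,\dots,4$, are distinct points at which the travel-time functions $\phi(x)=\int_{X_0}^{x}ds/f(s)$ and $\psi(x)=\int_{X_0}^{x}ds/g(s)$ agree (both equal $t_j$). Hence $\phi-\psi$ has five zeros; by Rolle, $1/f-1/g$ has four zeros, so $f-g$ has four zeros; two more applications of Rolle give three zeros of $f'-g'$ and two zeros of $f''-g''$. Only now does the specific form of the family enter: $f''-g''=0$ reads $Ck^{2}e^{-kx}=\tilde C\tilde k^{2}e^{-\tilde k x}$, and if this holds at two distinct points $x_1\neq x_2$ then $e^{(k-\tilde k)x_1}=e^{(k-\tilde k)x_2}$, forcing $k=\tilde k$ and then $C=\tilde C$ (here one needs $C,\tilde C\neq 0$ and $k,\tilde k\neq 0$, i.e.\ your nondegeneracy remarks, which require $S_0>0$ as well as $\beta,\gamma_2>0$). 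Climbing back up, the three zeros of $f'-g'=\tilde B-B$ give $B=\tilde B$, and the zeros of $f-g=A-\tilde A$ give $A=\tilde A$. Without this chain (or an equivalent), your proposal is a correct setup plus a statement of intent, not a proof.
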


\begin{proof}

We will first reduce the analysis to a single equation, namely the one for $D(t)$.  To do this we will write each of the involved quantities as functions of $D(t)$ as follows
\begin{equation*}
S=u(D), \tilde{I}=v(D), \tilde{R}=w(D).
\end{equation*}
The easiest to deal with is $\tilde{R}$ because from the last two equations we get
\begin{equation*}
\frac{d\tilde{R}(t)}{dt}=\frac{\gamma_1}{\alpha\gamma_2}\frac{dD(t)}{dt}
\end{equation*}
which leads to $\tilde{R}(t)=\frac{\gamma_1}{\alpha\gamma_2}(D(t)-D_0)+\tilde{R}_0$.

Now, we treat the function $u$ which determines $S(t)=u(D(t))$.  Dividing the first and the last from \eqref{sird_adj_a} we get
\begin{equation*}
u'(D)=-\frac{\beta}{\gamma_2}u(D)
\end{equation*}
which can be integrated to give $S(t)$ in terms of $D(t)$ as
\begin{equation}\label{e:S_s}
S(t)=S_0\exp\left( -\frac{\beta}{\gamma_2}(D(t)-D_0) \right).
\end{equation}
Furthermore, this allows us to solve for $\tilde{I}(t)=v(D(t))$.  To see this, add the first two equations from \eqref{sird_adj_a} and then combine this with the last one to arrive at
\begin{equation*}
\frac{dS}{dt}(t)+\frac{1}{\alpha}\frac{d\tilde{I}}{dt}(t)=-\frac{(\gamma_1+\gamma_2)}{\alpha}\tilde{I}(t)=-\frac{\gamma_1+\gamma_2}{\gamma_2}\frac{dD}{dt}(t)
\end{equation*}
from which we deduce that
\begin{equation*}
S(t)+\frac{1}{\alpha}\tilde{I}(t)+\frac{\gamma_1+\gamma_2}{\gamma_2}D(t)=S_0+\frac{1}{\alpha}\tilde{I}_0+\frac{\gamma_1+\gamma_2}{\gamma_2}D_0.
\end{equation*}
Solving now for $\tilde{I}$ and using \eqref{e:S_s} we obtain
\[
 \tilde{I}(t)=\alpha S_0 +\tilde{I}_0 +\frac{\alpha(\gamma_1+\gamma_2)}{\gamma_2}(D_0-D(t))-\alpha S_0\exp\left( -\frac{\beta}{\gamma_2}(D(t)-D_0) \right)
\]
which combined with the last equation of \eqref{sird_adj_a} shows that $D(t)$ satisfies the differential equation
\begin{equation}\label{e:D}
\frac{dD}{dt}(t)=\frac{\gamma_2}{\alpha}\tilde{I}_0-(\gamma_1+\gamma_2)(D(t)-D_0) + \gamma_2 S_0\left[ 1-\exp\left( -\frac{\beta}{\gamma_2}(D(t)-D_0) \right)\right].
\end{equation}

Before we move forward, we will treat a little bit a general problem. Assume we take a differential equation of the form
 \[
  \frac{dX}{dt}=f(X) \text{ with } X(0)=X_0\ge 0
 \]
where $f:\R\to\R$ is a Lipschitz function with $f(X_0)>0$.  The solution $X_t$ starts positive, and the derivative is positive, thus the solution is non-decreasing for a while. Moreover, the solution is defined for all $t\ge0$ from general results for ordinary differential equations.  By continuity of the solution, we have that the solution stays in the region $f> 0$ and thus it is increasing for all times it stays inside the region $f>0$.   It can not hit in finite time a point where $f(X(t_c))=0$ since then, reverting the equation (looking at $X(t_c-s)$) and combining this with the uniqueness of the solution, we must have that $X_t=X_{t_c}$ which is a contradiction.  Thus, the solution is increasing and we can integrate the equation as follows:
 \[
  \phi(X(t))=t \text{ where }\phi(x)=\int_{X_0}^x\frac{1}{f(s)}ds.
 \]
Notice here that the function $\phi$ is well defined on the interval of $f>0$ which contains $X_0$. Therefore we have $\phi(X(t))=t$ for all $t\ge0$ and $X(t)$ is increasing from $0$ to infinity.

Now assume that we have two differential equations
\[
\frac{dX(t)}{dt}=f(X(t)) \text{ and }\frac{dY(t)}{dt}=g(Y(t)) \text{ with } X_0=Y_0, f(X_0)>0,g(X_0)>0.
\]
At this stage, the point is that if $X(t_i)=Y(t_i)$ for some sequence of points $0=t_0<t_1<t_2<t_3<t_4$, then we obtain that
\[
 \phi(X(t_i))-\psi(X(t_i))=0 \text{ where }\phi(x)=\int_{X_0}^x\frac{1}{f(s)}ds, \psi(x)=\int_{X_0}^x\frac{1}{g(s)}ds.
\]
In particular, this implies that the function $\phi(x)-\psi(x)$ has at least five zeros.  Since the function $\phi(x)-\psi(x)$ is $C^1$, this implies that the derivative has at least four zeros, in other words this means that $\frac{1}{f(x)}-\frac{1}{g(x)}$ has at least four zeros. Finally, this means that $f(x)=g(x)$ has at least four solutions.

Returning to our problem we take now some parameters $a,b,c,d,\tilde{a},\tilde{b},\tilde{c},\tilde{d}$ and consider
\[
 f(x)=a-bx+c(1-e^{-dx})\text{ while }g(x)=\tilde{a}-\tilde{b}x+\tilde{c}(1-e^{-\tilde{d}x}).
\]
In the case $f(x)-g(x)=0$ has at least four solutions, we actually also get that $f'(x)-g'(x)=0$ has at least three solutions, which then upon taking another derivative gives that $f''(x)-g''(x)=0$ has at least 2 solutions.  Now this means that
\[
 cd^2 e^{-dx}= \tilde{c}\tilde{d}^2e^{-\tilde{d}x}
\]
has at least two different solutions.  The point is that if the above is satisfied for two different values of $x$, say $x_1$ and $x_2$, then
\[
 \frac{cd^2}{\tilde{c}\tilde{d}^2}=e^{(d-\tilde{d})x_1}=e^{(d-\tilde{d})x_2}
\]
which then leads to the conclusion that we must have $d=\tilde{d}$ and $c=\tilde{c}$.   Going now back the ladder, using the fact that $f'(x)=g'(x)$ for three distinct values of $x$ we have
\[
 -b+cde^{-dx}=-\tilde{b}+cde^{-dx}
\]
and thus $b=\tilde{b}$.  Finally, having $f(x)=g(x)$ for five different values of $x$ means that we also get that $a=\tilde{a}$, thus all the parameters must be equal.

Taking this back to our equation \eqref{e:D} and taking $X(t)=D(t)-D_0$, knowing the values $X(0)$, $X(1)$, $X(2)$, $X(3)$, $X(4)$, then we can uniquely determine the values of
\[
\begin{cases}
a=\frac{\gamma_2}{\alpha}\tilde{I}_0 \\
b=\gamma_1+\gamma_2\\
c=\gamma_2 S_0\\
d=\frac{\beta}{\gamma_2}.
\end{cases}
\]
Knowing these values is not enough to determine all the values of $\gamma_1,\gamma_2,\beta,\alpha$ because $S_0$ we know that
\[
 S_0=1-\frac{1}{\alpha}(\tilde{I}_0+\tilde{R}_0)-D_0
\]
which shows that we can solve now
\[
\begin{cases}
\alpha=\frac{c\tilde{I}_0+a(\tilde{I}_0+\tilde{R}_0)}{a(1-D_0)}\\
\beta=\frac{d(c\tilde{I}_0+a(\tilde{I}_0+\tilde{R}_0))}{(1-D_0)\tilde{I}_0}\\
\gamma_1=\frac{(b(1-D_0)-c)\tilde{I}_0-a(\tilde{I}_0+\tilde{R}_0)}{(1-D_0)\tilde{I}_0}\\
 \gamma_2=\frac{c\tilde{I}_0+a(\tilde{I}_0+\tilde{R}_0)}{(1-D_0)\tilde{I}_0}.
\end{cases}
\]
Consequently, knowing $D_0,D_1,D_2,D_3,D_4$ and $\tilde{I}_0,\tilde{R}_0$ we can determine the parameters $\beta,\gamma_1,\gamma_2, \alpha$

\end{proof}

\section{Appendix 2}\label{a:2}

We apply the same methodology to 3 other countries and analyze the results. We use the COVID19 data of Hungary, Czech Republic and Poland. We have to mention that we don’t have information about the procedure of reporting the number of cases for these countries, which can cause a bias in the results.\\ 
By replicating the technique we are confident that the predictive model that we presented in this paper can be also applied to other countries in order to identify the parameters of the model and to accurately assess the transmission dynamic of the pandemic. The results for the 3 countries are detailed below.\\ 

\textbf{1. Hungary}\\

First of all we clean the data and we obtain:
\begin{figure}[H]
  \centering
 \includegraphics[width=\linewidth]{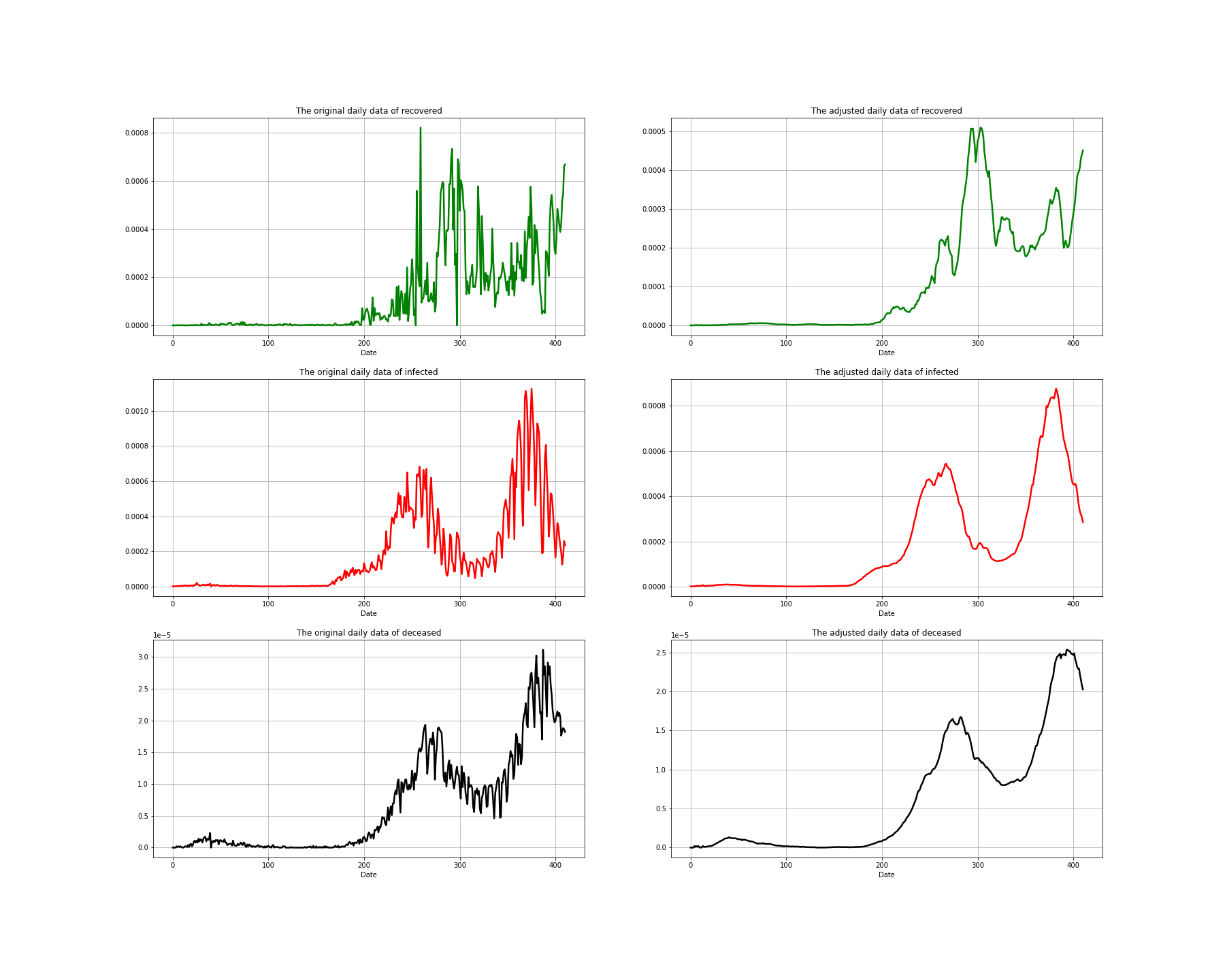}
   \caption{  Data adjustments according to the methodology presented into the article.  The rows describe the data (from top to bottom) of recovered, infected and dead.  The left column represents the raw data and the right column represents the adjusted data as we described above.  Notice the scale and the spike in the first picture which is adjusted as we pointed out.  The data was scaled by $10,000,000$, exactly as in the case of Romania}
\end{figure}

In the next Figure we present the results of the estimated parameters $\beta, \gamma_1,\gamma_2,\alpha$, for Hungary:

\begin{figure}[H]
  \centering
  \begin{subfigure}[b]{0.48\linewidth}
   \includegraphics[width=\linewidth]{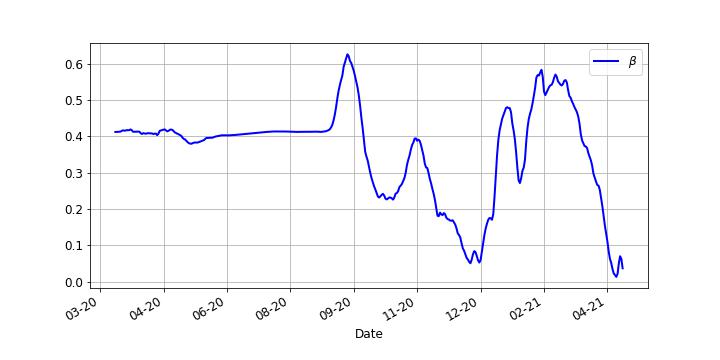}
  \end{subfigure}
  \begin{subfigure}[b]{0.48\linewidth}
   \includegraphics[width=\linewidth]{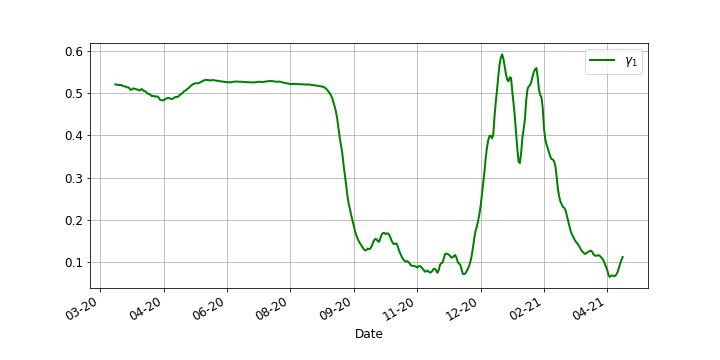}
  \end{subfigure}
  \begin{subfigure}[b]{0.48\linewidth}
   \includegraphics[width=\linewidth]{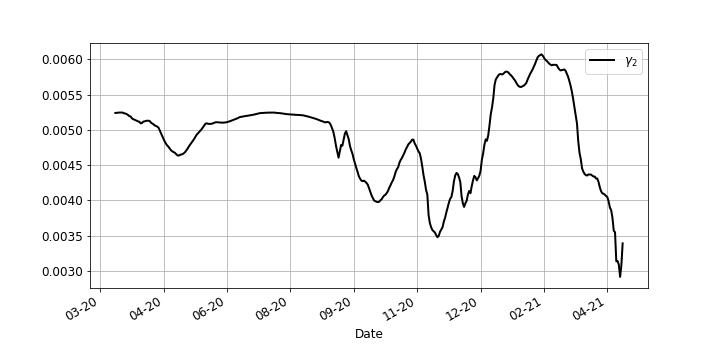}
  \end{subfigure}
  \begin{subfigure}[b]{0.48\linewidth}
   \includegraphics[width=\linewidth]{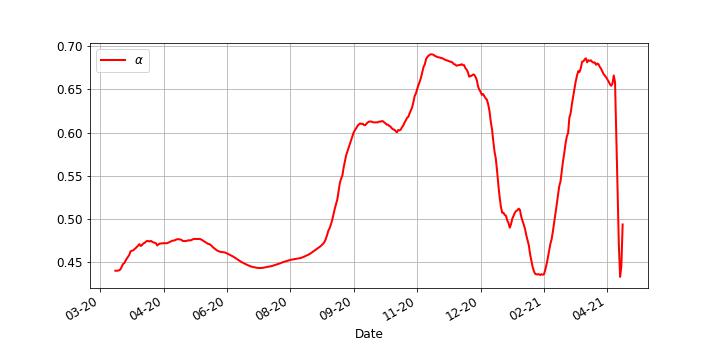}
  \end{subfigure}
  \caption{Parameters of the spreading of Covid19 in Hungary estimated using the same methodology as we used in the case of Romania.}
\end{figure}

Regarding the predictions:\\

1. Prediction of deaths, for 10 days:

\begin{figure}[H]
  \centering
 \includegraphics[width=\linewidth]{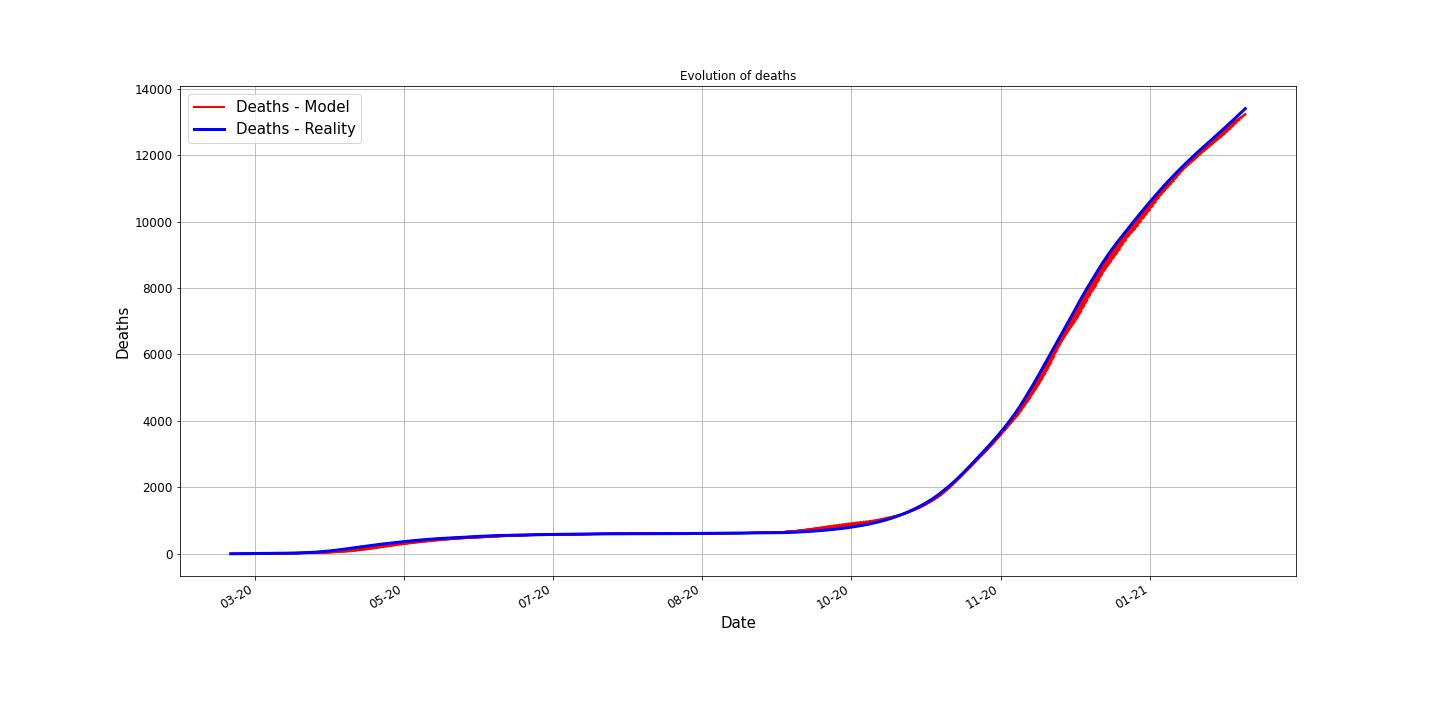}
   \caption{In blue is the real (reported) number of deaths.  For each day $k$, we plotted, in red, the prediction of the deaths, starting with day $k$, for 10 days.}
\end{figure}

2. Prediction of deaths, for 30 days:

\begin{figure}[H]
  \centering
 \includegraphics[width=\linewidth]{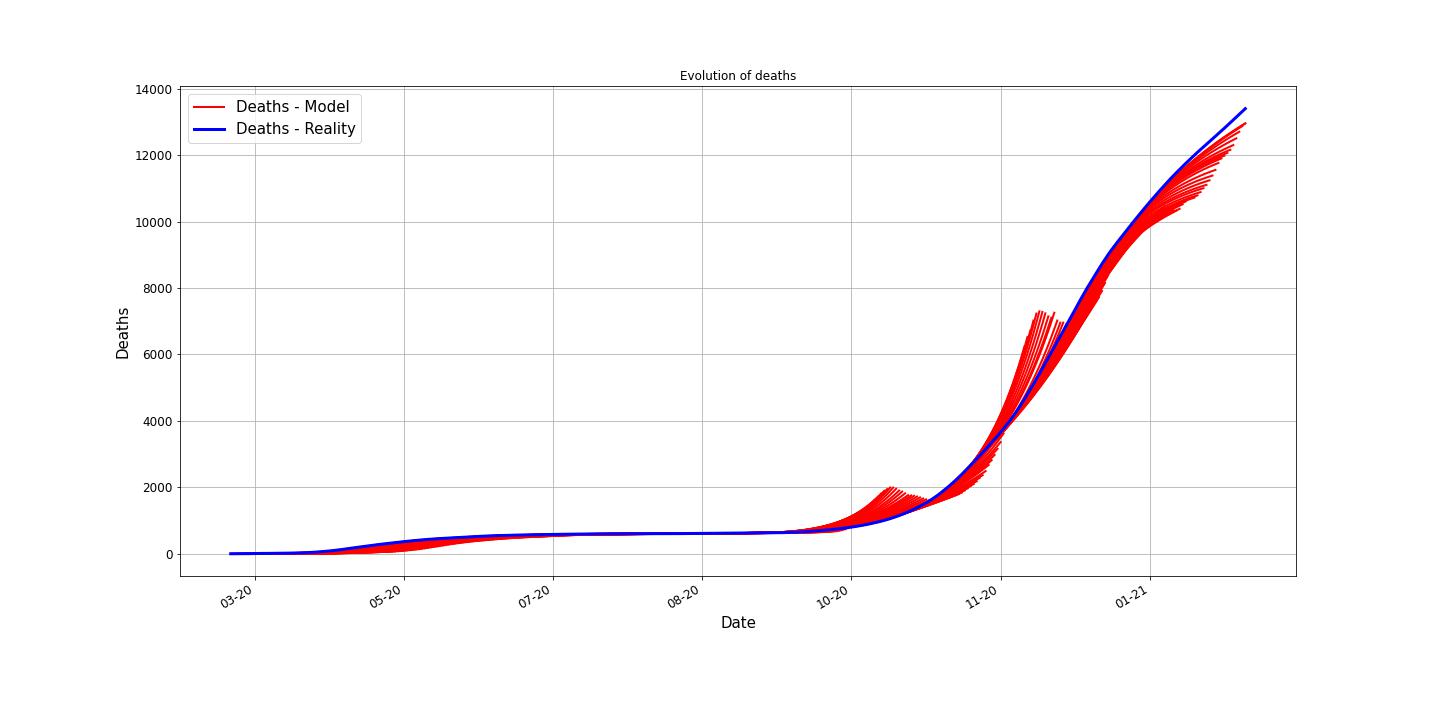}
   \caption{In blue is the real (reported) number of deaths.  For each day $k$, we plotted, in red, the prediction of the deaths, starting with day $k$, for 30 days.}
\end{figure}

3. Prediction of deaths, for 45 days:

\begin{figure}[H]
  \centering
 \includegraphics[width=\linewidth]{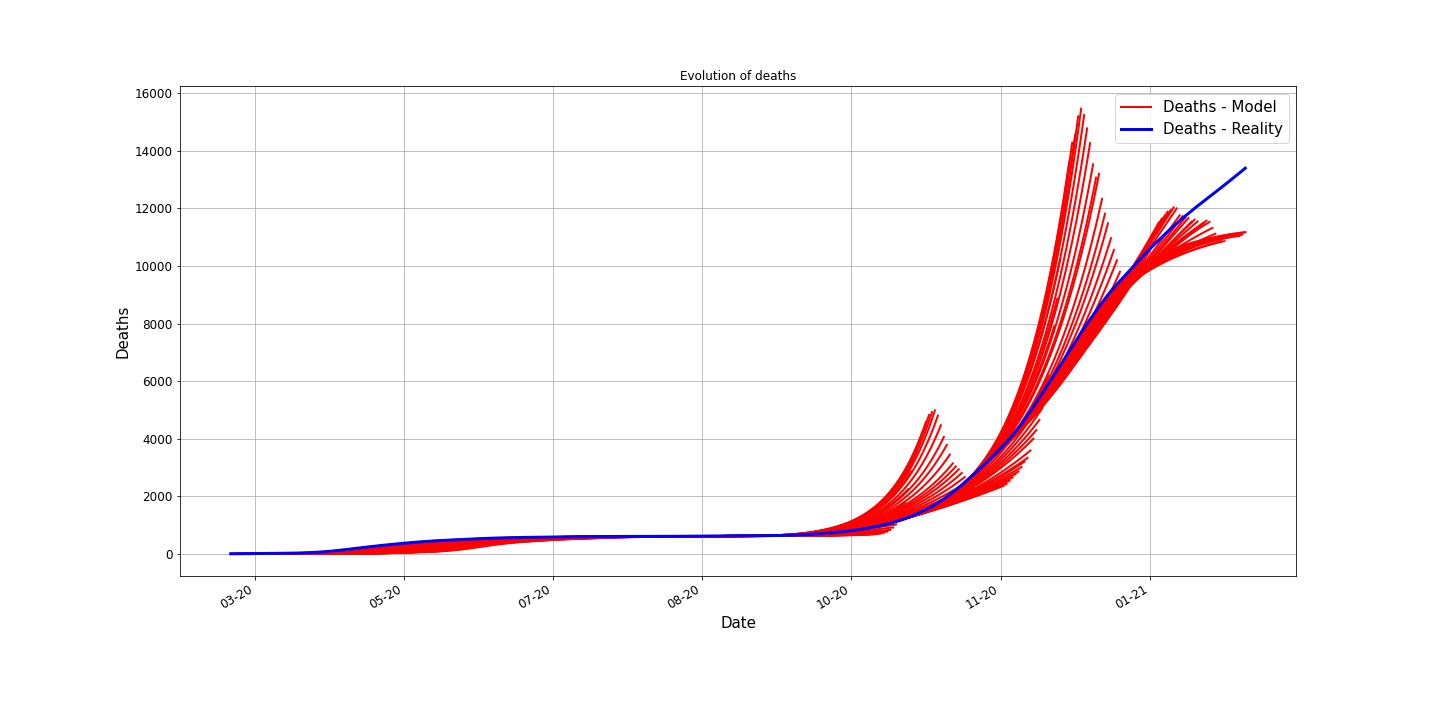}
   \caption{In blue is the real (reported) number of deaths.  For each day $k$, we plotted, in red, the prediction of the deaths, starting with day $k$, for 45 days.}
\end{figure}

Regarding the MAE values, for Hungary:
\begin{center}
\begin{tabular}{ |c|c|c|c| } 
\hline
Prediction & Case & MAE \\
\hline
\multirow{3}{*}{10 days prediction } & Deaths & 40.36850 \\ 
& Infected  & 699.77421 \\ 
& Recovered & 723.19638 \\ 
\hline
\multirow{3}{*}{30 days prediction } & Deaths & 153.05402 \\ 
& Infected  & 6600.95112 \\ 
& Recovered & 3645.40171 \\ 
\hline
\multirow{3}{*}{45 days prediction } & Deaths & 309.09216 \\ 
& Infected  & 15741.09146 \\ 
& Recovered & 6408.25157 \\ 
\hline
\end{tabular}
\end{center}

\textbf{2. Czech Republic}\\

First of all we clean the data and we obtain:
\begin{figure}[H]
  \centering
 \includegraphics[width=\linewidth]{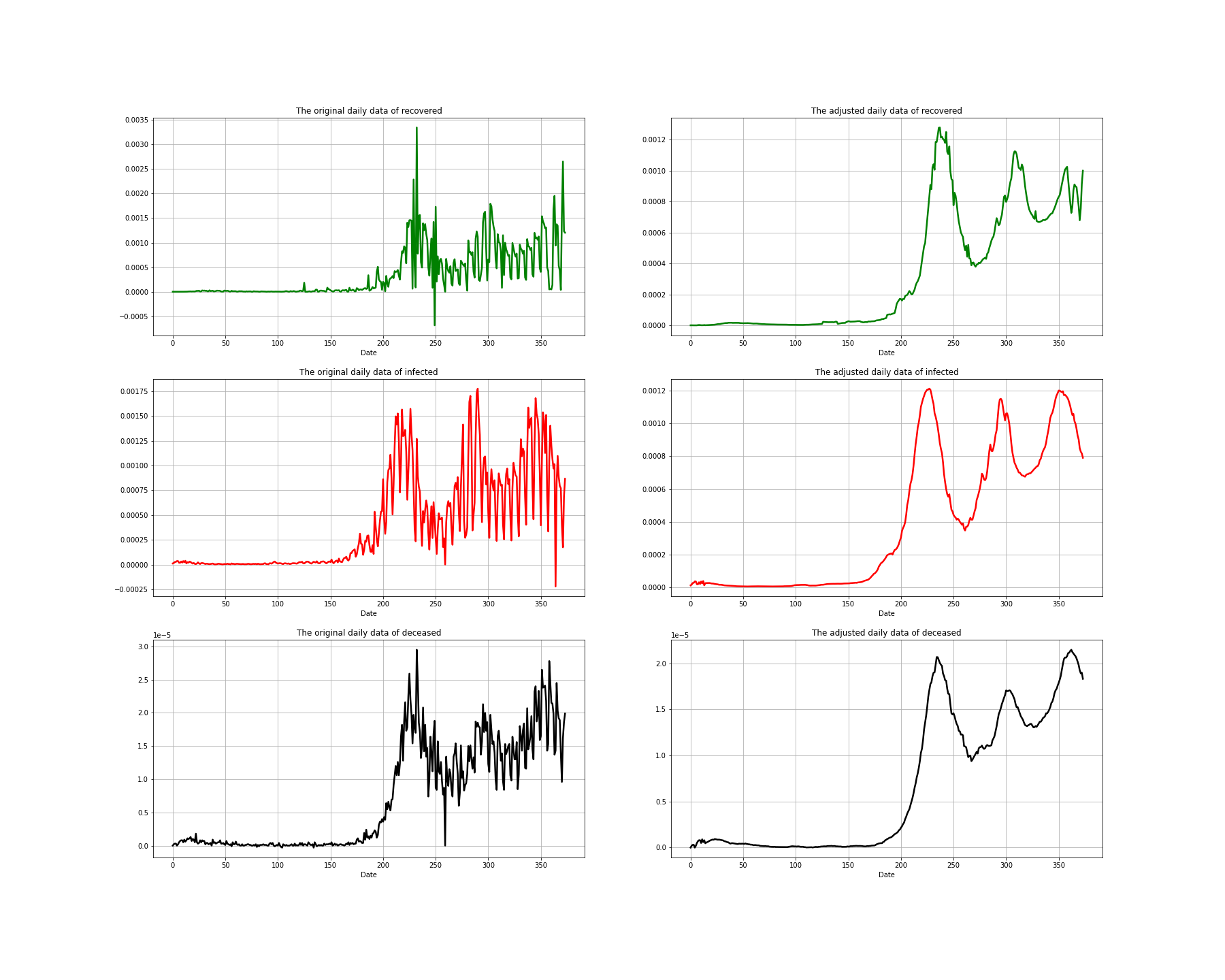}
   \caption{  Data adjustments according to the methodology presented into the article.  The rows describe the data (from top to bottom) of recovered, infected and dead.  The left column represents the raw data and the right column represents the adjusted data as we described above.  Notice the scale and the spike in the first picture which is adjusted as we pointed out.  The data was scaled by $10,000,000$, exactly as in the case of Romania}
\end{figure}

In the next Figure we present the results of the estimated parameters $\beta, \gamma_1,\gamma_2,\alpha$, for Czech Republic:

\begin{figure}[H]
  \centering
  \begin{subfigure}[b]{0.48\linewidth}
   \includegraphics[width=\linewidth]{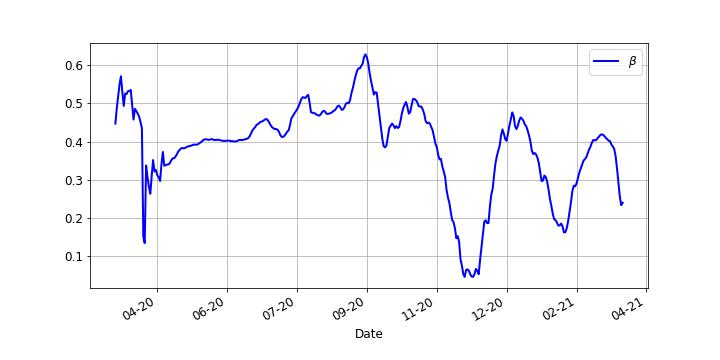}
  \end{subfigure}
  \begin{subfigure}[b]{0.48\linewidth}
   \includegraphics[width=\linewidth]{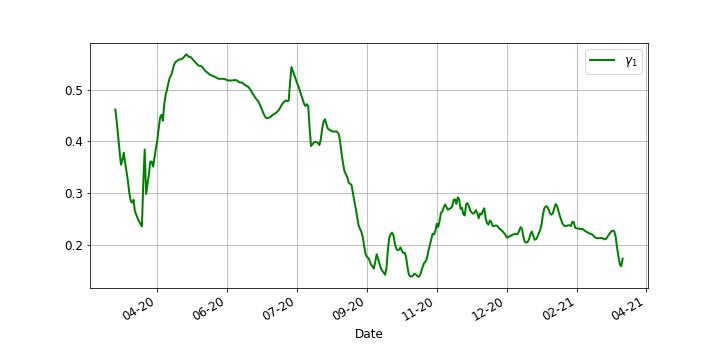}
  \end{subfigure}
  \begin{subfigure}[b]{0.48\linewidth}
   \includegraphics[width=\linewidth]{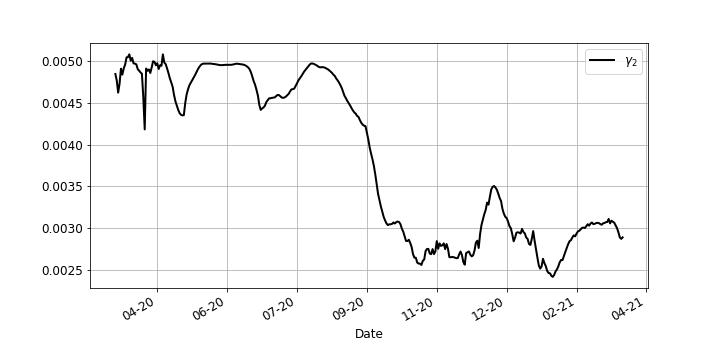}
  \end{subfigure}
  \begin{subfigure}[b]{0.48\linewidth}
   \includegraphics[width=\linewidth]{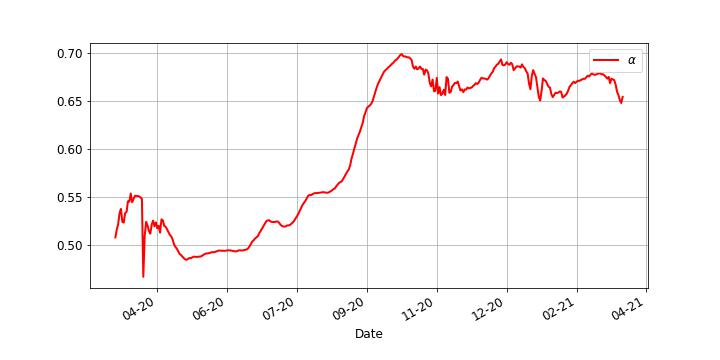}
  \end{subfigure}
  \caption{Parameters of the spreading of Covid19 in Czech Republic estimated using the same methodology as we used in the case of Romania.}
\end{figure}

Regarding the predictions:\\

1. Prediction of deaths, for 10 days:

\begin{figure}[H]
  \centering
 \includegraphics[width=\linewidth]{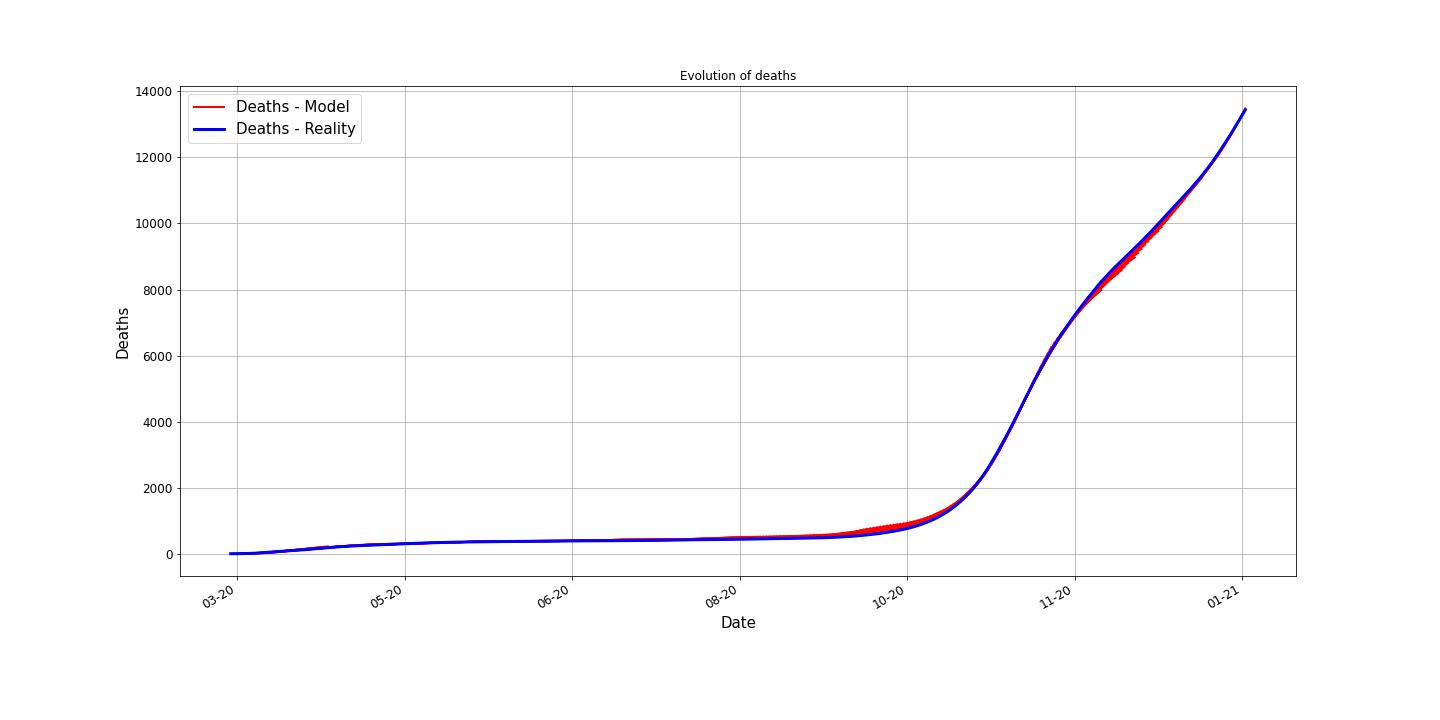}
   \caption{In blue is the real (reported) number of deaths.  For each day $k$, we plotted, in red, the prediction of the deaths, starting with day $k$, for 10 days.}
\end{figure}

2. Prediction of deaths, for 30 days:

\begin{figure}[H]
  \centering
 \includegraphics[width=\linewidth]{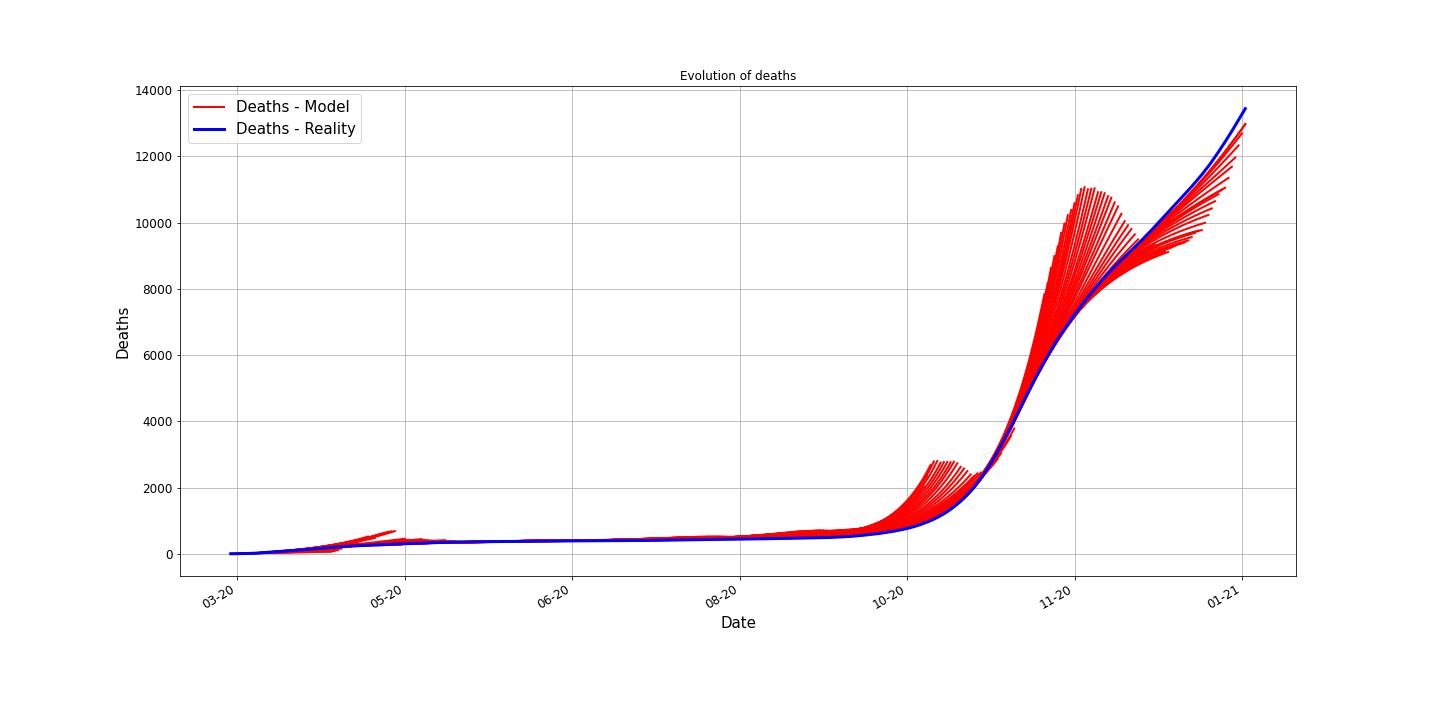}
   \caption{In blue is the real (reported) number of deaths.  For each day $k$, we plotted, in red, the prediction of the deaths, starting with day $k$, for 30 days.}
\end{figure}

3. Prediction of deaths, for 45 days:

\begin{figure}[H]
  \centering
 \includegraphics[width=\linewidth]{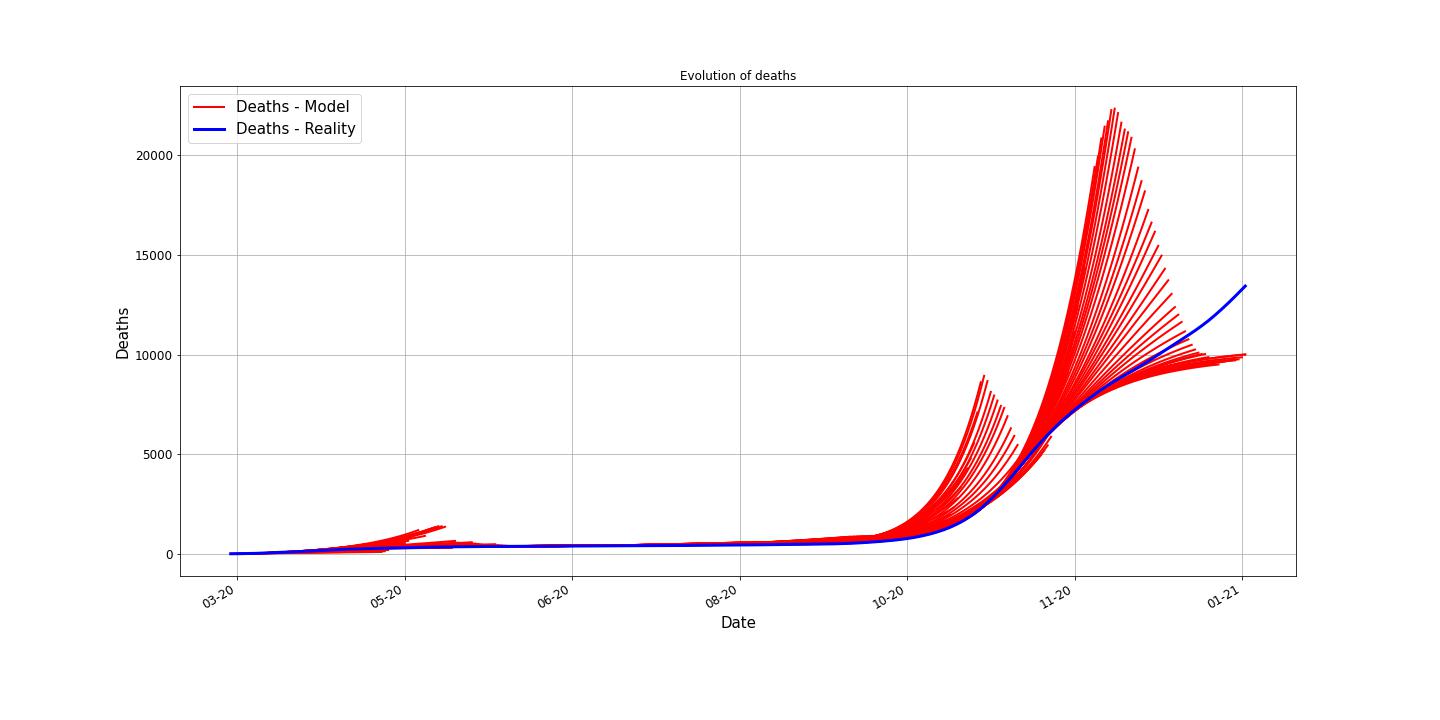}
   \caption{In blue is the real (reported) number of deaths.  For each day $k$, we plotted, in red, the prediction of the deaths, starting with day $k$, for 45 days.}
\end{figure}

Regarding the MAE values, for Czech Republic:
\begin{center}
\begin{tabular}{ |c|c|c|c| } 
\hline
Prediction & Case & MAE \\
\hline
\multirow{3}{*}{10 days prediction } & Deaths & 30.50940 \\ 
& Infected  & 2172.68655 \\ 
& Recovered & 921.83768 \\ 
\hline
\multirow{3}{*}{30 days prediction } & Deaths & 189.93063 \\ 
& Infected  & 5308.06519 \\ 
& Recovered & 3645.40171 \\ 
\hline
\multirow{3}{*}{45 days prediction } & Deaths & 543.08844 \\ 
& Infected  & 49255.44366 \\ 
& Recovered & 17283.88766 \\ 
\hline
\end{tabular}
\end{center}

\textbf{3. Poland} 

First of all we clean the data and we obtain:
\begin{figure}[H]
  \centering
 \includegraphics[width=\linewidth]{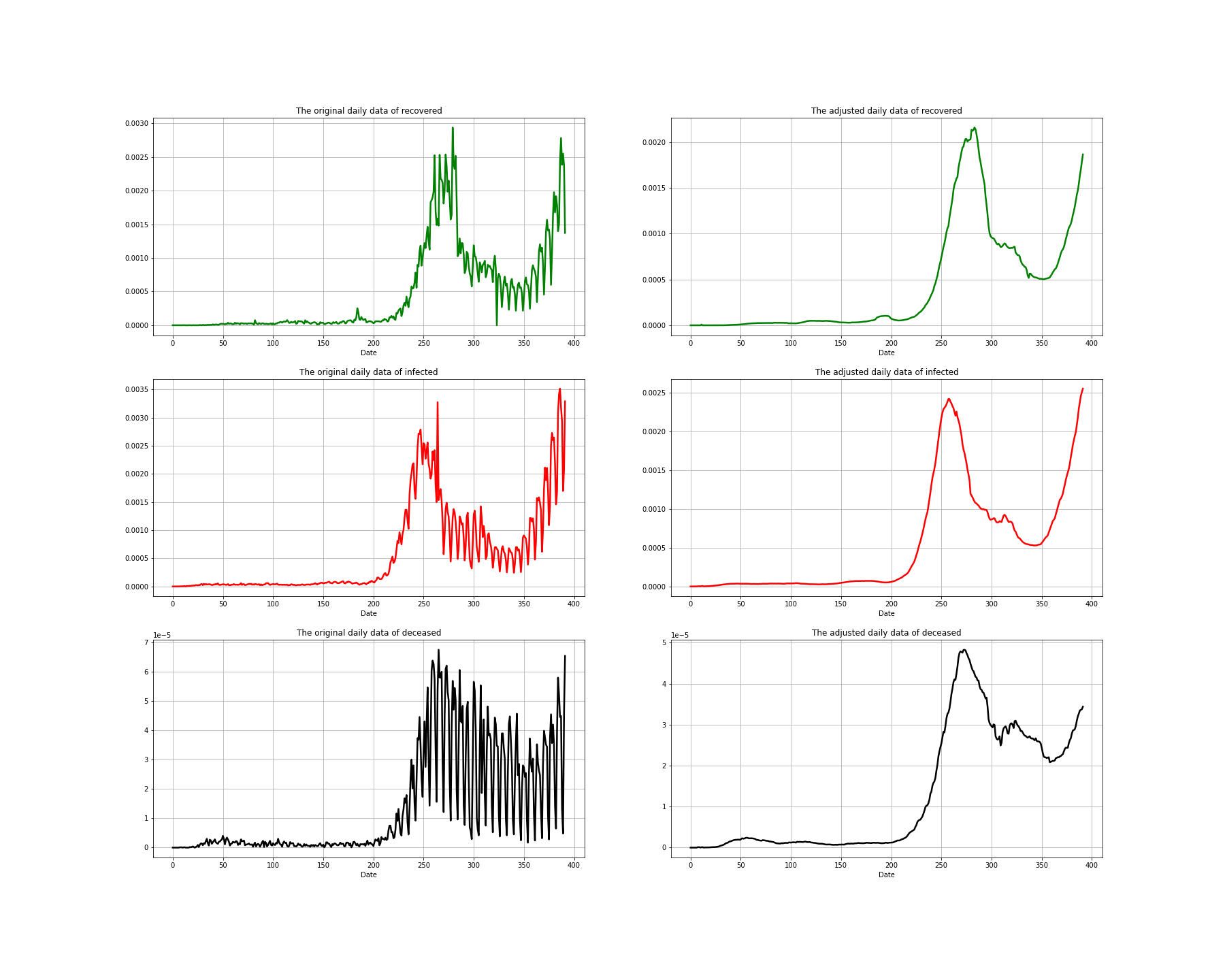}
   \caption{  Data adjustments according to the methodology presented into the article.  The rows describe the data (from top to bottom) of recovered, infected and dead.  The left column represents the raw data and the right column represents the adjusted data as we described above.  Notice the scale and the spike in the first picture which is adjusted as we pointed out.  The data was scaled by $10,000,000$, exactly as in the case of Romania}
\end{figure}

In the next Figure we present the results of the estimated parameters $\beta, \gamma_1,\gamma_2,\alpha$, for Poland:

\begin{figure}[H]
  \centering
  \begin{subfigure}[b]{0.48\linewidth}
   \includegraphics[width=\linewidth]{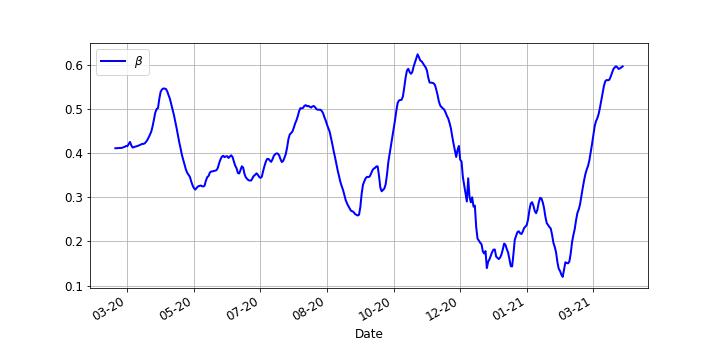}
  \end{subfigure}
  \begin{subfigure}[b]{0.48\linewidth}
   \includegraphics[width=\linewidth]{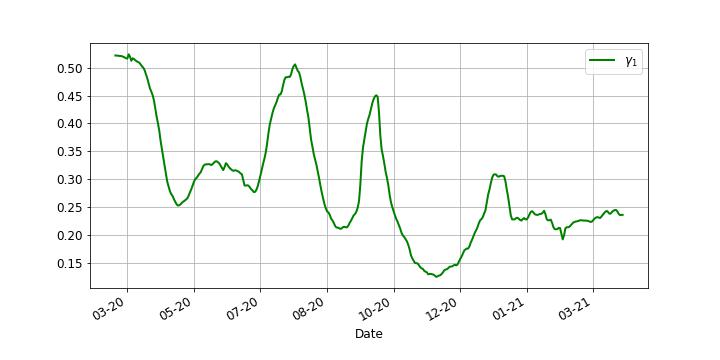}
  \end{subfigure}
  \begin{subfigure}[b]{0.48\linewidth}
   \includegraphics[width=\linewidth]{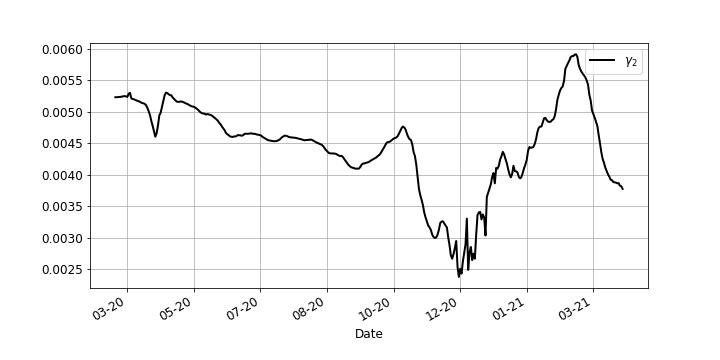}
  \end{subfigure}
  \begin{subfigure}[b]{0.48\linewidth}
   \includegraphics[width=\linewidth]{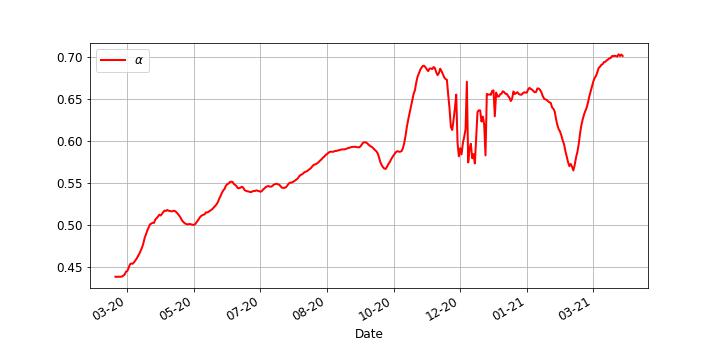}
  \end{subfigure}
  \caption{Parameters of the spreading of Covid19 in Poland estimated using the same methodology as we used in the case of Romania.}

\end{figure}

Regarding the predictions:\\

1. Prediction of deaths, for 10 days:

\begin{figure}[H]
  \centering
 \includegraphics[width=\linewidth]{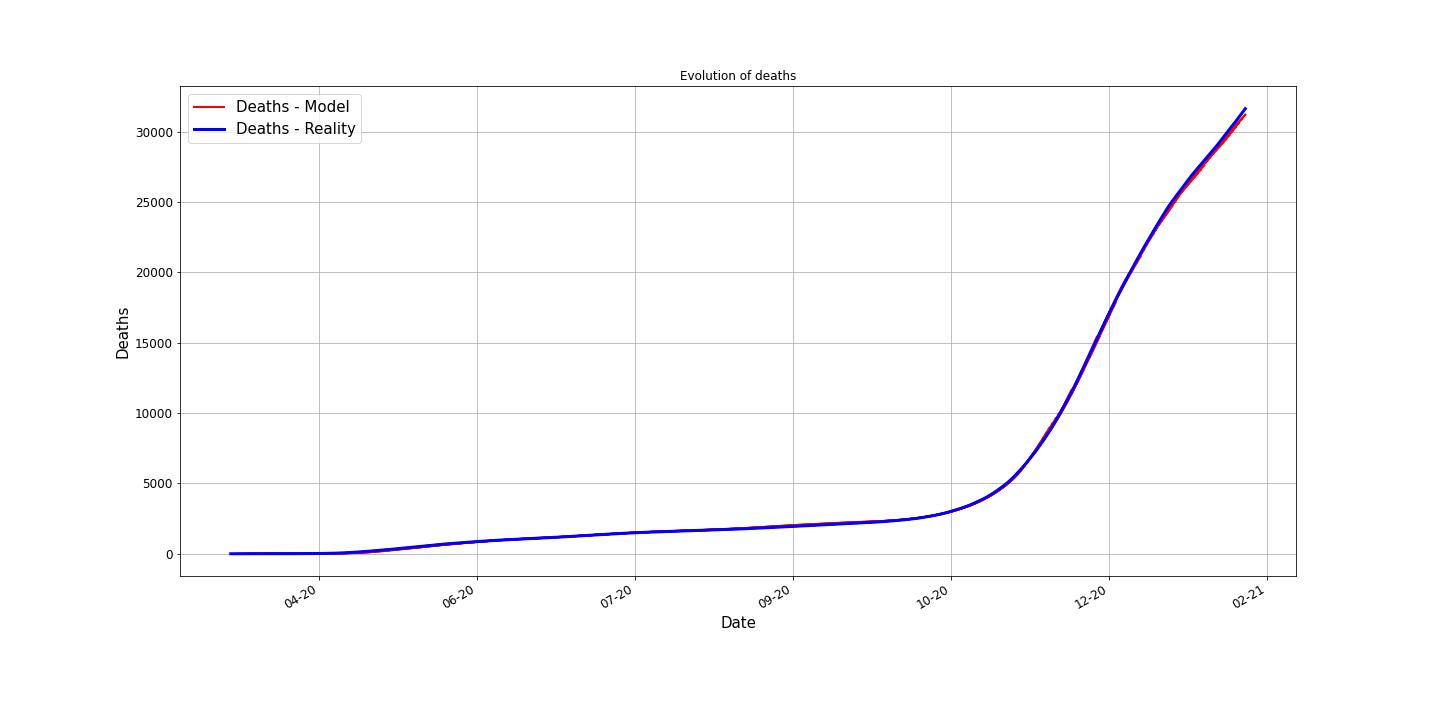}
   \caption{In blue is the real (reported) number of deaths.  For each day $k$, we plotted, in red, the prediction of the deaths, starting with day $k$, for 10 days.}
\end{figure}

2. Prediction of deaths, for 30 days:

\begin{figure}[H]
  \centering
 \includegraphics[width=\linewidth]{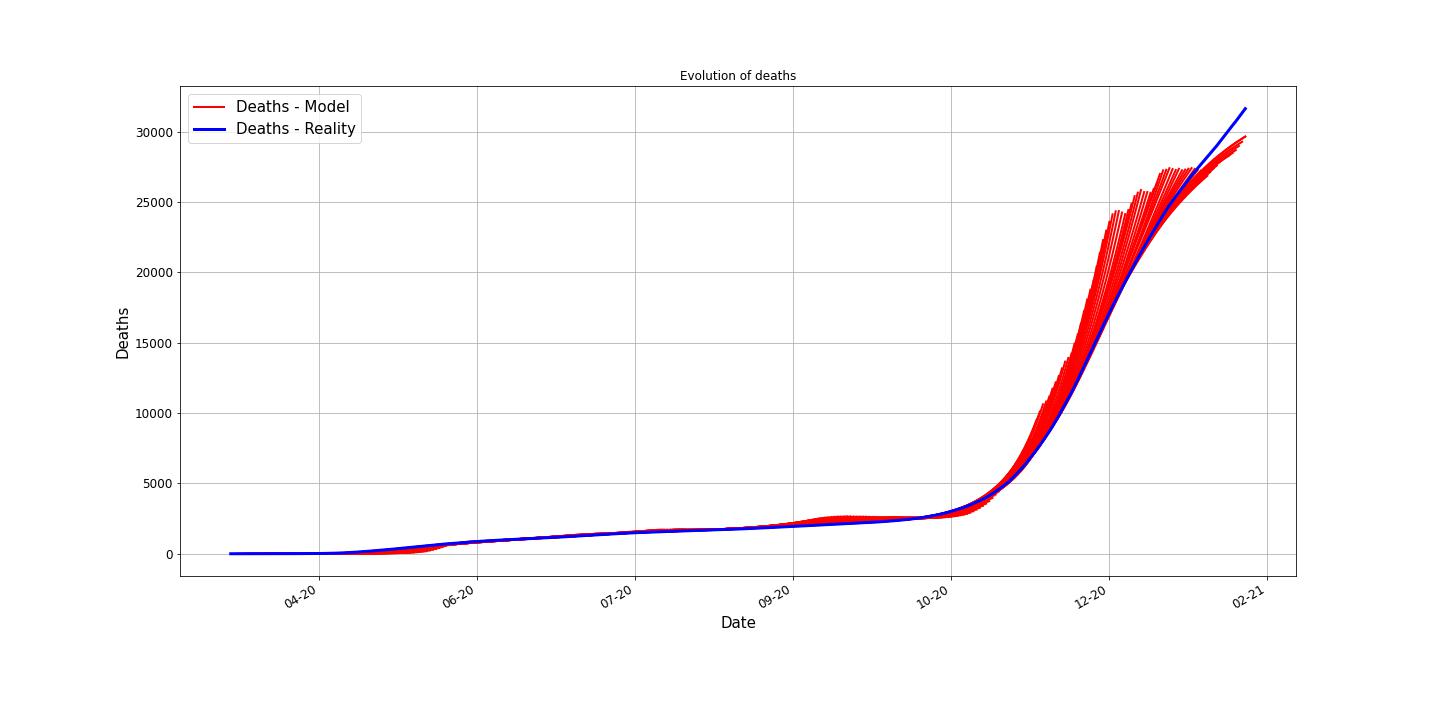}
   \caption{In blue is the real (reported) number of deaths.  For each day $k$, we plotted, in red, the prediction of the deaths, starting with day $k$, for 30 days.}
\end{figure}

3. Prediction of deaths, for 45 days:

\begin{figure}[H]
  \centering
 \includegraphics[width=\linewidth]{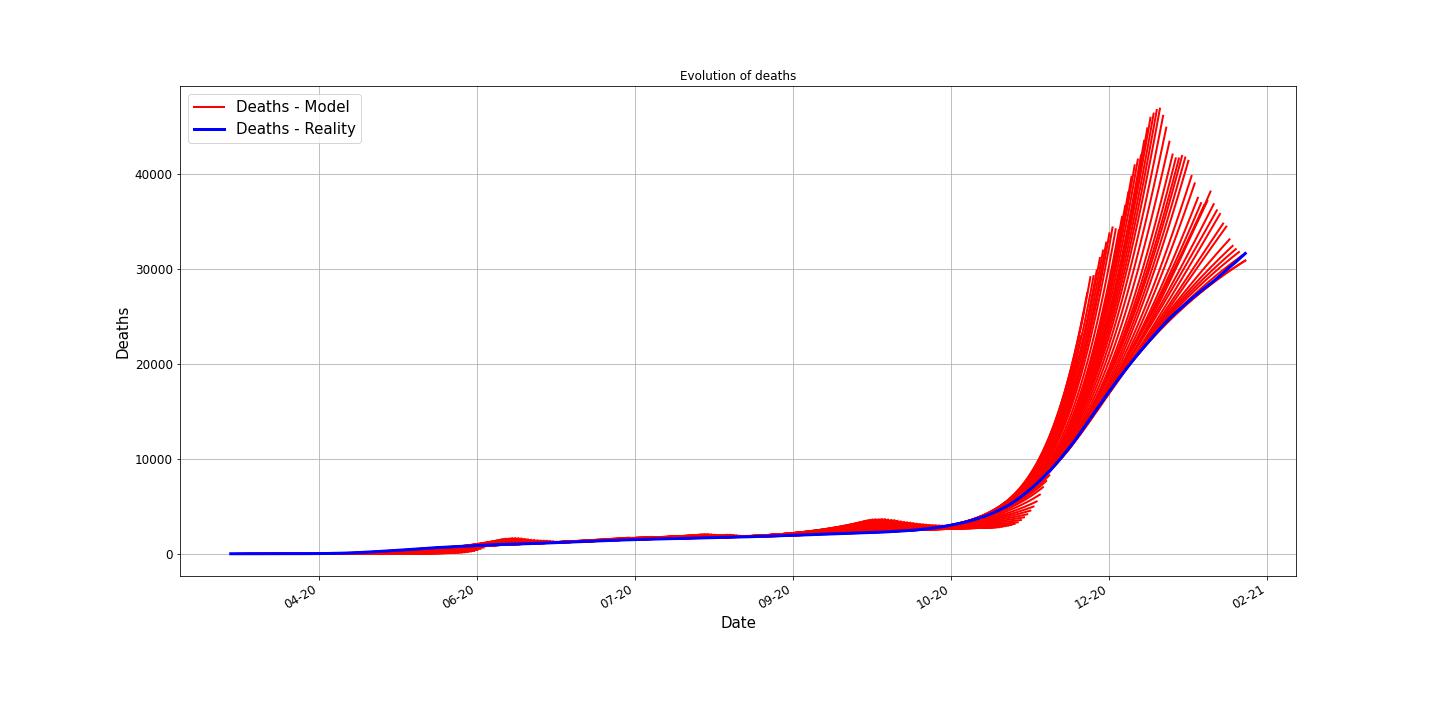}
   \caption{In blue is the real (reported) number of deaths.  For each day $k$, we plotted, in red, the prediction of the deaths, starting with day $k$, for 45 days.}
\end{figure}

Regarding the MAE values, for Poland:
\begin{center}
\begin{tabular}{ |c|c|c|c| } 
\hline
Prediction & Case & MAE \\
\hline
\multirow{3}{*}{10 days prediction } & Deaths & 42.08050 \\ 
& Infected  & 3397.06115 \\ 
& Recovered & 1122.41742 \\ 
\hline
\multirow{3}{*}{30 days prediction } & Deaths & 270.48052 \\ 
& Infected  & 30818.55274 \\ 
& Recovered & 5515.75961 \\ 
\hline
\multirow{3}{*}{45 days prediction } & Deaths & 842.45188 \\ 
& Infected  & 72149.78502 \\ 
& Recovered & 16057.94535 \\ 
\hline
\end{tabular}
\end{center}

\end{document}